\documentclass{article}

\PassOptionsToPackage{numbers, compress}{natbib}
\usepackage[preprint]{neurips_2026}

\usepackage[utf8]{inputenc} 
\usepackage[T1]{fontenc}    
\usepackage{hyperref}       
\usepackage{url}            
\usepackage{booktabs}       
\usepackage{amsfonts}       
\usepackage{nicefrac}       
\usepackage{microtype}      
\usepackage{xcolor}         
\usepackage{soul}
\usepackage{subcaption} 
\usepackage{multirow}
\usepackage{amsmath}
\usepackage{algorithm}
\usepackage{algpseudocode}
\usepackage{cleveref}
\usepackage{booktabs}  
\usepackage{amsthm}
\usepackage{graphicx}
\usepackage{wrapfig}
\usepackage{booktabs}
\usepackage{caption}
\newtheorem{definition}{Definition}

\newtheorem{proposition}{Proposition}
\newtheorem{theorem}{Theorem}

\title{Bernoulli Flow Models: Self-Consistent Generative Modeling for Binary Data}

\author{%
  \textbf{Hao Mo$^{1}$, Liying Yang$^{1}$, Shumin Yao$^{2}$, Xinxing Yu$^{1}$,} \\
  \textbf{Ajian Liu$^{3}$\thanks{Corresponding authors.}, Xudong Mao$^{4}$, Yanyan Liang$^{1}$\footnotemark[2]} \\
  \normalfont
  $^{1}$School of Computer Science and Engineering, Macau University of Science and Technology \\
  $^{2}$Department of Broadband Communication, Pengcheng Laboratory \\
  $^{3}$Institute of Automation, Chinese Academy of Sciences \\
  $^{4}$School of Artificial Intelligence, Sun Yat-sen University
}

\begin{document}

\setcounter{footnote}{1} 
\maketitle
\setcounter{footnote}{0} 

\begin{abstract}
Binary diffusion models typically require a massive number of function evaluations (NFEs) to generate high-quality samples, making practical inference computationally expensive. Reducing NFEs while preserving sample quality without relying on distillation or additional training remains a significant challenge. However, existing binary diffusion models sequentially define a discrete one-step forward path and subsequently derive the reverse posterior. In low-NFE scenarios that require cross-step sampling, these models incorrectly approximate the true multi-step likelihood using a single-step likelihood transition, which severely degrades sample quality. To address this fundamental limitation and completely decouple the generative dynamics from fixed discrete time steps, we propose Bernoulli Flow Models (BFM). Rather than building upon sequential one-step Markov diffusion chains, we predefine a unified continuous global Bernoulli probability flow path between data distributions and pure noise, from which we derive analytical closed-form posterior transitions over arbitrary time intervals. Consequently, reducing the inference NFE in BFM is no longer an approximation of skipped discrete steps; it simply requires re-evaluating the analytical posterior on the new time intervals. This eliminates the structural training-inference mismatch inherent to discrete chains, yielding strictly self-consistent low-NFE sampling. Experimental results show that BFM is highly robust to aggressive NFE reduction. On the LSUN Churches 256x256 dataset, a 256-step-trained BFM achieves an FID of 9.22 when sampled with only 16 steps, whereas the state-of-the-art discrete baseline severely degrades to 204.10. BFM also remains competitive with both continuous and discrete generative baselines under standard full-step inference. Ultimately, these results establish BFM as a theoretically rigorous, self-consistent, and practically effective framework for fast binary data generation.
Code is available at \url{https://github.com/minimo6666/BFM_NeurIPS2026}.
\end{abstract}

\section{Introduction}

Generative modeling over binary variables is a fundamental problem in binary image modeling, binarized latent representation learning, and statistical physics systems with two-state configurations. Recent binary generative models have shown that diffusion-style methods can achieve strong sample quality in binary spaces, including bit-based continuous diffusion formulations \cite{chen2022analog}, Binary Latent Diffusion (BLD) \cite{wang2023binary}, and recent binary diffusion models \cite{kinakh2025binary, wolleb2024binary}. However, these models still typically require hundreds or even thousands of sequential function evaluations (NFEs) to generate high-fidelity samples, making practical inference computationally expensive.

A natural objective is to reduce the inference NFE while preserving sample quality without additional distillation, consistency training, or task-specific retraining. In continuous diffusion models, this problem has been extensively studied through training-based acceleration, such as progressive distillation \cite{salimans2022progressive} and consistency models \cite{song2023consistency}, as well as training-free samplers, such as DDIM \cite{song2020denoising} and DPM-Solver \cite{lu2022dpm}, which evaluate the same continuous-time dynamics on a coarser inference grid \cite{ho2020denoising, song2020score, lipman2022flow}. In binary state spaces, however, such training-free acceleration is far less straightforward, because many binary diffusion models are defined through fixed one-step discrete Markov transitions rather than an arbitrary-interval continuous path.

The key difficulty is that low-NFE binary sampling is not merely a matter of skipping steps. Under the original full-step inference schedule, the reverse process remains aligned with the training-time Markov chain. Once the inference grid is coarsened, each reverse update must jump across multiple intermediate training intervals. For example, sampling a 256-step-trained binary model with only 16 steps requires each inference step to represent a large cross-step transition. A valid low-NFE sampler therefore needs a reverse posterior defined over arbitrary time intervals; reusing the original one-step posterior for such enlarged intervals no longer describes the same generative process.

This exposes a fundamental self-consistency issue in existing binary diffusion samplers. Although multi-step transitions may be derived in principle for a discrete Markov chain, practical binary diffusion samplers are often tied to the local transition rule used during training. When the sampling schedule changes after training, the required cross-step posterior is approximated by local one-step likelihood transitions, introducing a structural mismatch between the training-time process and the low-NFE inference-time process. As the step budget becomes smaller, this mismatch is amplified, leading to severe degradation in sample quality. Thus, the central obstacle to fast binary generation is not simply the lack of a faster numerical solver, but the lack of an analytical posterior mechanism that remains valid under changed inference grids.

Recent continuous-time discrete denoising and discrete flow-matching frameworks provide important foundations for discrete generative modeling \cite{campbell2022continuous, sun2022score, lou2023discrete, campbell2024generative, gat2024discrete, cheng2024categorical, davis2024fisher}. Nevertheless, these general formulations do not directly resolve the Bernoulli-specific low-NFE problem considered here: deriving closed-form posterior transitions over arbitrary time intervals so that changing the inference NFE becomes a rigorous reparameterization of the same binary generative process, rather than an approximate cross-step sampler.

To address this limitation, we propose \textbf{Bernoulli Flow Models (BFM)}, a self-consistent generative framework for binary data. Instead of defining generation through fixed one-step Markov transitions, BFM starts from a unified continuous Bernoulli probability path between data and pure noise. From this path, we derive closed-form marginal, transition, and posterior distributions over arbitrary time intervals. Consequently, reducing the number of inference steps only requires re-evaluating the same analytical posterior on the new time intervals, which decouples the generative dynamics from a fixed training-time discretization and makes low-NFE sampling self-consistent by construction.
Our contributions are summarized as follows:
Our contributions can be summarized as follows:
\begin{itemize}
    \item \textbf{A Closed-Form Bernoulli Generative Framework:}
    We propose Bernoulli Flow Models (BFM), which predefine a continuous Bernoulli probability path and derive analytical closed-form marginal, transition, and posterior distributions from this path.

    \item \textbf{Self-Consistent Low-NFE Inference:}
    The resulting arbitrary-interval posterior enables principled cross-step updates when the inference NFE is reduced after training. Instead of approximating a large multi-step jump with local one-step rules, BFM recomputes the analytical posterior on the new time intervals, yielding self-consistent low-NFE sampling.

    \item \textbf{Effective Training-Free Acceleration in Practice:}
    Experiments show that BFM is highly robust to aggressive NFE reduction. From a single 256-step checkpoint on LSUN Churches ($256 \times 256$), BFM maintains an FID of 9.22 with only 16 sampling steps, whereas the corresponding baseline degrades to 204.10. Additional results show that BFM remains competitive under the original full-step inference setting across diverse tasks.
\end{itemize}

\section{Related Work}

\textbf{Binary and Bit-Level Flow/Diffusion Models.}
Binary generative modeling has recently received increasing attention. Early work already explored diffusion-style modeling on binary state spaces \cite{sohl2015deep}. Analog Bits \cite{chen2022analog} proposed a bit-encoded continuous diffusion approach, where discrete variables are represented as binary bits and then modeled as real-valued analog bits by a continuous diffusion model. Binary Latent Diffusion (BLD) \cite{wang2023binary} more directly studied diffusion-based generation in binarized latent spaces and demonstrated strong high-resolution image generation performance. Other recent works further investigated Bernoulli or binary diffusion formulations, such as masked Bernoulli diffusion for anomaly detection \cite{wolleb2024binary} and Binary Diffusion Probabilistic Models (BDPM) \cite{kinakh2025binary}. Concurrently, Binary Flow Matching \cite{hong2026binary} studies robust flow matching on binary manifolds from the perspective of prediction--loss space alignment, showing that aligning the training objective with the signal space can remove singular weighting and improve training stability. These works demonstrate the promise of binary or bit-level diffusion/flow modeling. However, their focus is mainly on representation, training objectives, or binary-space generative quality, while the problem of self-consistent low-NFE inference from a fixed checkpoint remains underexplored.

\textbf{Fast Sampling and Training-Free Acceleration.}
A major limitation of diffusion-based generative models is their slow iterative sampling, which typically requires many sequential function evaluations (NFEs). In continuous domains, extensive work has studied acceleration. Training-based methods, such as progressive distillation \cite{salimans2022progressive} and consistency models \cite{song2023consistency}, reduce sampling steps through an additional training stage. Training-free samplers, such as DDIM \cite{song2020denoising} and DPM-Solver \cite{lu2022dpm}, instead accelerate sampling directly from a pretrained model by evaluating the reverse dynamics on a coarser inference grid. These techniques have made low-NFE generation practical for continuous diffusion models. However, binary diffusion models cannot directly rely on continuous-domain ODE or SDE solvers, especially when the reverse process is defined through discrete Bernoulli transitions. Reducing the NFE then requires valid cross-step posterior updates over enlarged time intervals, rather than simply applying a continuous-domain solver. This makes training-free acceleration in binary generative modeling substantially less developed.

\textbf{Relation to General Discrete Diffusion and Flow Models.}
A separate line of work studies generative modeling over broader discrete or categorical state spaces, including discrete diffusion models \cite{austin2021structured}, continuous-time discrete denoising frameworks \cite{campbell2022continuous, sun2022score, lou2023discrete}, and recent discrete flow-matching formulations \cite{campbell2024generative, gat2024discreteflow, cheng2024categorical, davis2024fisher}. These works provide important general foundations for discrete generative modeling. Our work is complementary but more specialized: rather than proposing a generic categorical diffusion or flow construction, we focus on Bernoulli binary variables and derive closed-form posterior transitions over arbitrary time intervals, enabling self-consistent low-NFE inference from a single pretrained checkpoint.

\section{Bernoulli Flow Models}
\label{sec:method}

In this section, we present the theoretical framework of Bernoulli Flow Models (BFM). We detail how a unified continuous global Bernoulli probability flow path is established to completely decouple the generative dynamics from fixed discrete time steps, and formally derive the resulting analytical probability distributions.

\subsection{Notation and Problem Formulation}
\label{subsec:Notation}

Consider an arbitrary binary data distribution $p_\mathbf{X}(\mathbf{x})$ over the sample space $\mathcal{X} = \{0, 1\}^d$, where $d \in \mathbb{N}^+$ denotes the dimensionality, and $\mathbf{X}$ is a discrete random vector. We denote the Bernoulli distribution as $\mathcal{B}$.

Given target dataset observations $X_0 \sim \pi_0$ and a prior $X_1 \sim \pi_1$ (where $\pi_1 = \mathcal{B}(0.5)$ represents binary white noise), we define an optimal transport-style Bernoulli probability path. Analogous to linear interpolation in continuous spaces, we construct the path directly in the Bernoulli parameter space across continuous time $t_\tau \in [1, 0]$:
\begin{equation}
    X_{t_{\tau}} \sim  \mathcal{B}(t_{\tau}X_0 + (1-t_{\tau})X_1), \quad \tau = 0, 1, \dots, T,
    \label{eq:ot_interpolation}
\end{equation}	
where $X_{t_{0}} \equiv X_0$ and $X_{t_{T}} \equiv X_1$. The random variable $X_{t_{\tau}}$ strictly follows the Bernoulli probability flow path from the data to the prior.

\subsection{Bernoulli Probability Flow Dynamics}
\label{subsec:dynamics}

Building upon the optimal transport-style Bernoulli probability path defined above, and assuming the prior distribution $\pi_1$ is a uniform Bernoulli distribution $\mathcal{B}(0.5)$ (representing binary white noise), the marginal transition probability from the target observation $X_0$ (or $X_{t_{0}}$) to an arbitrary intermediate state $X_{t_{\tau}}$ can be analytically expressed as:
\begin{equation}
  \begin{split}
    q(X_{t_{\tau}}|X_{t_{0}}) &=  \mathcal{B}(X_{t_{\tau}};\alpha_{t_{\tau}}), \\ 
    \alpha_{t_{\tau}} &= g\left(X_{t_{0}},\frac{1-t_{\tau}}{2}\right), \\
    g(a,b) &= a(1-b) + b(1-a).
  \end{split}
    \label{eq:marginal_x0_xt}
\end{equation}
For $0\leq t_\tau<t_{\tau-1}\leq1$, by exploiting the Markov property along the constructed path, we can further derive the one-step forward transition probability between sequential states $X_{t_{\tau-1}}$ and $X_{t_{\tau}}$:
\begin{equation}
  \begin{split}
    q(X_{t_{\tau}}|X_{t_{\tau-1}}) & = \mathcal{B}(X_{t_{\tau}};g(X_{t_{\tau-1}},\gamma_{t_{\tau}})), \\
  \gamma_{t_{\tau}} &= \frac{0.5(t_{\tau-1} - t_{\tau})}{t_{\tau-1}},
  \end{split}
    \label{eq:one_step_transition}
\end{equation}
where $\gamma_{t_{\tau}}$ denotes the explicit flip probability from $X_{t_{\tau-1}}$ to $X_{t_{\tau}}$. 

Using Bayes' theorem, the ground-truth posterior distribution of the previous state $X_{t_{\tau-1}}$ conditioned on the current state $X_{t_{\tau}}$ and the source data $X_{t_{0}}$ is formulated, for $x\in\{0,1\}$, as:
\begin{equation}
q(X_{t_{\tau-1}}=x\mid X_{t_{\tau}},X_{t_0})
=
\frac{
q(X_{t_{\tau}}\mid X_{t_{\tau-1}}=x)
q(X_{t_{\tau-1}}=x\mid X_{t_0})
}{
\sum_{x'\in\{0,1\}}
q(X_{t_{\tau}}\mid X_{t_{\tau-1}}=x')
q(X_{t_{\tau-1}}=x'\mid X_{t_0})
},
\label{eq:bayes}
\end{equation}
which simplifies into the analytical closed-form solution:
\begin{equation}
  q(X_{t_{\tau-1}}|X_{t_{\tau}}, X_{t_{0}}) =  \mathcal{B}\left(X_{t_{\tau-1}}; \frac{g(X_{t_{\tau}},\gamma_{t_{\tau}})\alpha_{t_{\tau-1}}}{g(g(X_{t_{\tau}},\gamma_{t_{\tau}}),1-\alpha_{t_{\tau-1}})}\right).
    \label{eq:posterior}
\end{equation}

Similar to the $x_0$-prediction objective in continuous DDPM \cite{ho2020denoising}, we parameterize our generative flow model $f_\theta$ to directly predict the uncorrupted data $X_{t_{0}}$. This design aligns the model output with the analytical Bernoulli posterior probability path, thereby enabling the generative process to sample back to the target distribution $\pi_0$ from the prior $\pi_1$. The backward Bernoulli flow transition step is parameterized as:
\begin{equation}
  \begin{split}
  p_{\theta}(X_{t_{\tau-1}}|X_{t_{\tau}}) 
  = \mathcal{B}\left(X_{t_{\tau-1}}; \frac{g(X_{t_{\tau}},\gamma_{t_{\tau}})g\left(f_\theta(X_{t_{\tau}},t_{\tau}), \frac{1-t_{\tau-1}}{2}\right)}{g\left(g(X_{t_{\tau}},\gamma_{t_{\tau}}),1-g\left(f_\theta(X_{t_{\tau}},t_{\tau}),\frac{1-t_{\tau-1}}{2}\right)\right)}\right).
  \end{split}
\label{eq:backward_transition_params}
\end{equation}

Here, $f_\theta$ is parameterized to directly predict the uncorrupted data $X_{t_{0}}$. Following the standard $x_0$-prediction paradigm widely adopted in diffusion and flow matching models, we employ binary cross-entropy (BCE) to formulate the training objective:

\begin{equation}
  \begin{split}
  \mathcal{L}_{\text{BCE}}(\theta) &:= -\mathbb{E}_{t_{\tau},X_{t_{0}},X_{t_{\tau}}} \big[ X_{t_{0}} \log \left( f_\theta(X_{t_{\tau}},t_{\tau}) \right) \\
  & \quad + (1 - X_{t_{0}}) \log \left(1 - f_\theta(X_{t_{\tau}},t_{\tau})  \right) \big].
  \end{split}
  \label{eq:bce_loss}
\end{equation}

Detailed proofs for the derivations in this section are provided in Appendix \ref{appendix_proofs}.

\subsection{Self-Consistent Cross-Step Sampling}
\label{subsec:cross_step}

A fundamental advantage of BFM emerges during fast inference when the number of function evaluations (NFEs) is aggressively reduced. Suppose we change the inference time grid and need to jump from $t_{\tau}$ directly to a distant step $t_{\tau-k}$ ($k > 1$). In existing heuristic chain-based models, navigating this large stride requires accumulating multiple local one-step transitions, leading to severe approximation errors.

In contrast, BFM naturally supports analytical cross-step posterior sampling. Because the marginal parameters $\alpha$ and the transition flip probabilities $\gamma$ in our framework are continuous analytical functions of time $t \in [0,1]$, evaluated in the decreasing direction from data to noise, we can directly compute the analytical transition over any arbitrary interval. By generalizing Eq.~\ref{eq:one_step_transition}, the analytical cross-step forward transition from $t_{\tau-k}$ to $t_{\tau}$ is uniquely determined by:
\begin{equation}
    \gamma_{t_{\tau} | t_{\tau-k}} = \frac{0.5(t_{\tau-k} - t_{\tau})}{t_{\tau-k}}.
    \label{eq:cross_step_gamma}
\end{equation}

By substituting $\gamma_{t_{\tau} | t_{\tau-k}}$ and the corresponding marginal $\alpha_{t_{\tau-k}}$ into Bayes' theorem, we establish the following core theoretical guarantee of our framework:

\begin{theorem}[Analytical Cross-Step Posterior]
\label{theorem:cross_step}
For any reverse interval satisfying $0\leq t_{\tau}<t_{\tau-k}\leq1$ (and hence $t_{\tau-k}>0$), where $k>1$, the analytical cross-step posterior of the Bernoulli Flow Model admits a perfect closed-form solution:
\begin{equation}
    q(X_{t_{\tau-k}} \mid X_{t_{\tau}}, X_{t_{0}}) =  \mathcal{B}\left(X_{t_{\tau-k}}; \frac{g(X_{t_{\tau}},\gamma_{t_{\tau} | t_{\tau-k}})\alpha_{t_{\tau-k}}}{g(g(X_{t_{\tau}},\gamma_{t_{\tau} | t_{\tau-k}}),1-\alpha_{t_{\tau-k}})}\right).
    \label{eq:cross_step_posterior}
\end{equation}
\end{theorem}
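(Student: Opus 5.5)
We prove Theorem~\ref{theorem:cross_step} by the same Bayes argument that gave Eq.~\ref{eq:posterior}, with the one-step flip probability replaced by $\gamma_{t_\tau|t_{\tau-k}}$. Three ingredients are needed. The first is the marginal $q(X_{t_{\tau-k}}\mid X_{t_0})=\mathcal{B}(\alpha_{t_{\tau-k}})$, which is just Eq.~\ref{eq:marginal_x0_xt} evaluated at time $t_{\tau-k}$. The second is the cross-step forward kernel
\[
q(X_{t_\tau}\mid X_{t_{\tau-k}})=\mathcal{B}\bigl(X_{t_\tau};g(X_{t_{\tau-k}},\gamma_{t_\tau|t_{\tau-k}})\bigr).
\]
The third is the conditional independence $q(X_{t_\tau}\mid X_{t_{\tau-k}},X_{t_0})=q(X_{t_\tau}\mid X_{t_{\tau-k}})$, supplied by the Markov property of the constructed path that the paper already uses for Eq.~\ref{eq:one_step_transition}.

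\textbf{Step 1: the cross-step kernel.} We check that it reproduces the path marginal. Write $b_s=(1-s)/2$, so that $\alpha_s=g(X_{t_0},b_s)$. Composing two binary symmetric flips with probabilities $p$ and $q$ yields a single flip with probability $g(p,q)=p+q-2pq$. With $p=b_{t_{\tau-k}}$ and $q=\gamma_{t_\tau|t_{\tau-k}}$, we have $1-2p=t_{\tau-k}$ and
\[
1-2q=1-\frac{t_{\tau-k}-t_\tau}{t_{\tau-k}}=\frac{t_\tau}{t_{\tau-k}}.
\]
Since $1-2g(p,q)=(1-2p)(1-2q)=t_\tau$, the composed flip probability is $b_{t_\tau}$, matching Eq.~\ref{eq:marginal_x0_xt}. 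This is well defined exactly because $t_{\tau-k}>0$, which is why the theorem assumes it. The same identity shows that the one-step kernels of Eq.~\ref{eq:one_step_transition} compose across intermediate grid points into this kernel, since the factors $t_{i}/t_{i-1}$ telescope. So the kernel is consistent with the Markov chain and does not depend on how the interval is subdivided; uniqueness follows because the flip probability is determined by $1-2q=t_\tau/t_{\tau-k}$.

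\textbf{Step 2: Bayes' rule.} For $x\in\{0,1\}$, write Eq.~\ref{eq:bayes} with $\tau-1$ replaced by $\tau-k$. Because $g$ is symmetric in its arguments, the kernel can be read in the variable $x$: the likelihood $q(X_{t_\tau}\mid X_{t_{\tau-k}}=x)$ equals $\mathcal{B}(x;\,g(X_{t_\tau},\gamma))$ with $\gamma=\gamma_{t_\tau|t_{\tau-k}}$. Set $u=g(X_{t_\tau},\gamma)$ and $a=\alpha_{t_{\tau-k}}$. The numerator at $x=1$ is $ua$. The denominator is $ua+(1-u)(1-a)$, and this equals $g(u,1-a)$ by direct expansion of $g(a,b)=a(1-b)+b(1-a)$. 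Hence $q(X_{t_{\tau-k}}=1\mid X_{t_\tau},X_{t_0})=ua/g(u,1-a)$, which is Eq.~\ref{eq:cross_step_posterior}.

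\textbf{Main obstacle.} The delicate step is justifying the Markov and conditional-independence structure. Eq.~\ref{eq:ot_interpolation} only specifies marginals, so a joint process must be fixed: the noising process built from successive independent symmetric flips. Under that coupling the conditional independence is immediate, and Step 1 shows that its marginals agree with Eq.~\ref{eq:marginal_x0_xt}. The only other point to verify is that the denominator $g(u,1-a)$ is positive. This holds whenever $u$ and $a$ are not both degenerate in opposite directions, and that degenerate configuration has probability zero under $q$.
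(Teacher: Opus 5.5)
Your proof is correct and follows the paper's argument: substitute the cross-step flip probability $\gamma_{t_\tau|t_{\tau-k}}$ and the marginal $\alpha_{t_{\tau-k}}$ into Bayes' rule, then fold the denominator $ua+(1-u)(1-a)$ into $g(u,1-a)$. Your Step 1 (the identity $1-2g(p,q)=(1-2p)(1-2q)$, the telescoping of $t_i/t_{i-1}$, and the explicit choice of the independent-flip Markov coupling) makes rigorous what the paper only asserts when it says Propositions~\ref{prop:marginal} and~\ref{prop:forward} hold for any stride. One small simplification: since $\gamma_{t_\tau|t_{\tau-k}}\in(0,\tfrac12]$, you have $u\in(0,1)$, so the denominator is always strictly positive and your probability-zero caveat is unnecessary.
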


\begin{proof}
The rigorous derivation of Theorem \ref{theorem:cross_step} is provided in Appendix \ref{appendix_proofs}.
\end{proof}

During fast sampling, we re-parameterize the
decreasing schedule of $t\in[0,1]$ from $1$ to $0$ to the new NFE
budget $K$ (where $K\ll T$).
We refer to this property as the
\textbf{self-consistency of BFM under cross-step sampling}.
This establishes the structural consistency of
the oracle posterior under arbitrary inference grids.
The approximation errors introduced by the learned
predictor are analyzed in the following section.

\subsection{Error Bounds for Learned Reverse Sampling}
\label{sec:learned_error}

Theorem~\ref{theorem:cross_step} establishes the exact
cross-step posterior conditioned on the clean state
$X_{t_0}$. In practice, $X_{t_0}$ is replaced by
$f_\theta(X_t,t)$. We characterize the resulting error
for both one-step and cross-step sampling.

\begin{proposition}[Learned Reverse Transition Error]
\label{prop:learned_error}
For any reverse interval $0\leq t<s<1$, where
$s=t_{\tau-k}$ and $t=t_\tau$, the coordinate-wise
total variation (TV) distance between the oracle
and learned posteriors satisfies
\begin{equation}
\begin{aligned}
&D_{\mathrm{TV}}\big(
q(X_s\mid X_t,X_{t_0}),
p_\theta(X_s\mid X_t)
\big)\\
&\quad\leq
\frac{s^2-t^2}{s(1-t)^2}
\left|f_\theta(X_t,t)-X_{t_0}\right|.
\end{aligned}
\label{eq:learned_tv_bound}
\end{equation}
\end{proposition}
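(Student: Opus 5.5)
The plan is to reduce the statement to a one-dimensional Lipschitz estimate for the posterior parameter of Theorem~\ref{theorem:cross_step}, viewed as a function of the clean-data input.

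\textbf{Step 1: reduce TV to parameter difference.} Coordinate-wise, both $q(X_s\mid X_t,X_{t_0})$ and $p_\theta(X_s\mid X_t)$ are Bernoulli distributions. For Bernoulli laws, $D_{\mathrm{TV}}(\mathcal{B}(p_1),\mathcal{B}(p_2))=|p_1-p_2|$. So it suffices to bound the difference of the two posterior parameters.

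\textbf{Step 2: write the parameter as a function of the clean input.} Fix a coordinate and write $c=g(X_t,\gamma)$ with $\gamma=\gamma_{t\mid s}=(s-t)/(2s)$ from Eq.~\ref{eq:cross_step_gamma}. Since $X_t\in\{0,1\}$, we have $c\in\{\gamma,1-\gamma\}$. For $y\in[0,1]$ set
\[
a(y)=g\!\left(y,\tfrac{1-s}{2}\right)=\tfrac{1-s}{2}+sy ,
\]
which is affine in $y$. Using $g(c,1-a)=ca+(1-c)(1-a)$, define
\[
h(a)=\frac{ca}{ca+(1-c)(1-a)} .
\]
By Eq.~\ref{eq:cross_step_posterior}, the oracle parameter is $h(a(X_{t_0}))$. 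By the cross-step analogue of Eq.~\ref{eq:backward_transition_params}, the learned parameter is $h(a(f_\theta(X_t,t)))$. Since $f_\theta\in[0,1]$, both arguments of $h$ lie in the interval $I=[\tfrac{1-s}{2},\tfrac{1+s}{2}]$. The mean value theorem then gives
\[
|h(a(f_\theta))-h(a(X_{t_0}))|\le s\,\sup_{a\in I}|h'(a)|\;|f_\theta-X_{t_0}| .
\]

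\textbf{Step 3: bound the derivative.} The quotient rule gives $h'(a)=c(1-c)/D(a)^2$, where $D(a)=ca+(1-c)(1-a)$. The numerator is the same in both cases:
\[
c(1-c)=\gamma(1-\gamma)=\frac{(s-t)(s+t)}{4s^2}.
\]
The main obstacle is a uniform lower bound on the denominator $D$ over $I$. Because $D$ is linear in $a$, it suffices to evaluate it at the endpoints of $I$. For $c=\gamma$ the endpoint values are
\[
\tfrac{1+s}{2}-\gamma s=\tfrac{1+t}{2}, \qquad \tfrac{1-s}{2}+\gamma s=\tfrac{1-t}{2}.
\]
The case $c=1-\gamma$ is symmetric. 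Hence $D\ge(1-t)/2$ on $I$, which is positive since $t<1$.

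\textbf{Step 4: combine.} Putting the pieces together,
\[
s\cdot\frac{s^2-t^2}{4s^2}\cdot\frac{4}{(1-t)^2}=\frac{s^2-t^2}{s(1-t)^2},
\]
which is exactly Eq.~\ref{eq:learned_tv_bound}. The same argument with $s=t_{\tau-1}$ covers the one-step case.
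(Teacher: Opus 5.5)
Your proposal is correct and follows essentially the same route as the paper. Both arguments reduce TV to the Bernoulli parameter difference, view the posterior parameter as a function of the clean input, and bound its derivative using the numerator $s\,\gamma(1-\gamma)=\frac{s^2-t^2}{4s}$ and the denominator lower bound $\frac{1-t}{2}$, concluding by the mean value theorem. Your factorization through $h\circ a$ with endpoint evaluation of the linear denominator simply repackages the paper's direct computation of $r'(u)$, where the denominator is written explicitly as $\frac{1-t}{2}+tu$ or $\frac{1+t}{2}-tu$.
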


The corresponding KL divergence is also bounded by an
interval-dependent constant times the binary cross-entropy
(BCE) loss, connecting the training objective to the
learned reverse-transition error.

\begin{proposition}[Accumulated Sampling Error]
\label{prop:accumulated_error}
For an inference grid $1=t_0>\cdots>t_K=0$,
the final sampling error satisfies
\begin{equation}
\begin{aligned}
&D_{\mathrm{TV}}\big(
\pi_0,p_\theta(X_{t_0})
\big)\\
&\quad\leq
\min\left\{
1,\,
E_{\mathrm{rec}}+
\sum_{j=2}^{K}
\frac{t_{j-1}^{2}-t_j^{2}}
{t_{j-1}(1-t_j)^2}
\varepsilon_j
\right\},
\end{aligned}
\label{eq:accumulated_tv_bound}
\end{equation}
\end{proposition}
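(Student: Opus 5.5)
The plan is to compare the learned reverse chain with an \emph{oracle} reverse chain through a hybrid (telescoping) argument. Each one-step discrepancy is then controlled by Proposition~\ref{prop:learned_error}. Throughout, index the grid as in the statement: $t_0=1$ is the noise end and $t_K=0$ is the data end. I read $\varepsilon_j$ as the expected aggregated prediction error at the $j$-th update,
\begin{equation*}
\varepsilon_j=\mathbb{E}\Big[\textstyle\sum_{i=1}^d\big|f_\theta(X_{t_{j-1}},t_{j-1})_i-(X_{\mathrm{data}})_i\big|\Big],
\end{equation*}
with the expectation taken under the true path marginals. I read $E_{\mathrm{rec}}$ as the error contributed by the first update out of pure noise, $s=t_0=1$, which is excluded from Proposition~\ref{prop:learned_error} because it requires $s<1$.

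\textbf{Step 1 (oracle chain has the right marginals).} Fix a data point $x$. Start from $X_{t_0}\sim\pi_1=\mathcal{B}(0.5)$, which equals $q(X_{t_0}\mid x)$ since $\alpha=1/2$ at $t=1$. Then apply the cross-step posterior of Theorem~\ref{theorem:cross_step} on each interval $(t_j,t_{j-1})$. Because that posterior comes from Bayes' rule applied to $q(X_{t_j}\mid X_{t_{j-1}})\,q(X_{t_{j-1}}\mid x)$, marginalizing over $X_{t_{j-1}}$ reproduces $q(X_{t_j}\mid x)$. By induction, the oracle chain has marginal $q(X_{t_j}\mid x)$ at every grid point. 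At $t_K=0$ this marginal is the point mass $\alpha=x$. Averaging over $x\sim\pi_0$ therefore returns exactly $\pi_0$.

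\textbf{Step 2 (hybrid decomposition).} For $m=0,\dots,K$, let $H_m$ be the chain that uses learned kernels $p_\theta$ for the first $m$ updates and oracle kernels afterwards. Then $H_0$ ends at $\pi_0$ and $H_K$ ends at $p_\theta$. By the triangle inequality, $D_{\mathrm{TV}}(\pi_0,p_\theta)\le\sum_{m}D_{\mathrm{TV}}(H_{m-1},H_m)$.

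The two chains $H_{m-1}$ and $H_m$ differ only in the $m$-th kernel, and everything after it is a common Markov map. The data-processing inequality therefore bounds each term by $\mathbb{E}\,D_{\mathrm{TV}}\big(q(\cdot\mid X_{t_{m-1}},x),p_\theta(\cdot\mid X_{t_{m-1}})\big)$. Here $X_{t_{m-1}}$ is drawn from the law of the state reached after the first $m-1$ learned updates.

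\textbf{Step 3 (plugging in Proposition~\ref{prop:learned_error}).} Both kernels factorize over coordinates, so the TV distance between the product distributions is at most the sum of the coordinate TVs. For $m\ge 2$ we have $s=t_{m-1}<1$, and Proposition~\ref{prop:learned_error} gives the factor $\frac{t_{m-1}^2-t_m^2}{t_{m-1}(1-t_m)^2}$ multiplying $\sum_i|f_\theta-x_i|$. Taking expectations yields $\varepsilon_m$, and the $m=1$ term is by definition $E_{\mathrm{rec}}$. The $\min\{1,\cdot\}$ is automatic because TV never exceeds $1$.

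\textbf{Main obstacle.} The delicate point is the conditioning. The oracle kernels depend on $x$, while the learned kernels do not. The hybrid argument should therefore first be run conditionally on $x$, and the results then averaged; the convexity of TV justifies passing from the conditional bounds to $D_{\mathrm{TV}}(\pi_0,p_\theta)$.

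A second issue is which distribution the expectation in $\varepsilon_j$ is taken under. In $H_m$ the state before the $m$-th update is produced by the first $m-1$ learned updates, so the natural expectation is under that learned-prefix law, not the true marginal. There are two ways to proceed. The first is to run the hybrid in the reverse order, so that the expectation in $\varepsilon_j$ falls under the true marginal. The alternative is to state $\varepsilon_j$ directly under the law of the partially learned chain. I would try the first option and accept the second if it does not go through.
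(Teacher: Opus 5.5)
Your architecture is sound: hybrid chains, data processing for the common suffix, subadditivity of TV over product kernels, Proposition~\ref{prop:learned_error}, and convexity over the clean sample. However, you have the time axis reversed, and several steps are false as written. Here $\alpha_t=g\bigl(x,\tfrac{1-t}{2}\bigr)=\tfrac{1+(2x-1)t}{2}$, so $t=1$ is the \emph{data} end ($\alpha_1=x$) and $t=0$ is the \emph{noise} end ($\alpha_0=\tfrac12$). The sampler therefore runs $t_K=0\to t_{K-1}\to\cdots\to t_0=1$, which is why the statement compares $\pi_0$ with $p_\theta(X_{t_0})$.

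This breaks your proposal in four places:
\begin{itemize}
\item Your Step~1 claim that $\pi_1$ is the path marginal at $t=1$ is false; that marginal is $\delta_x$.
\item A chain moving from $t_{j-1}$ to $t_j<t_{j-1}$ is a noising chain. Neither Theorem~\ref{theorem:cross_step} nor Proposition~\ref{prop:learned_error} applies to it.
\item In Step~3 you silently switch back to the correct orientation ($s=t_{j-1}>t=t_j$, i.e.\ the kernel from $X_{t_j}$ to $X_{t_{j-1}}$). Proposition~\ref{prop:learned_error} bounds that kernel by $|f_\theta(X_{t_j},t_j)-X_{t_0}|$, so $\varepsilon_j$ must control $\mathbb{E}_{q(X_{t_0},X_{t_j})}\|f_\theta(X_{t_j},t_j)-X_{t_0}\|_1$. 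Your $\varepsilon_j$ evaluates the network at $(X_{t_{j-1}},t_{j-1})$, so it belongs to a different kernel from the coefficient it multiplies.
\item $E_{\mathrm{rec}}$ is the \emph{last} update, from $t_1$ into the data at $t_0=1$ (the only interval with $s=1$), not the first update out of noise.
\end{itemize}
The repair is to start the oracle chain at $X_{t_K}\sim\pi_1$, which is the path marginal at $t=0$ for every $x$. The posteriors of Theorem~\ref{theorem:cross_step} then carry it up to $\delta_x$ at $t_0=1$.

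Once reoriented, your preferred option (oracle kernels first from the noise end, learned kernels afterwards) is exactly the paper's proof. The paper's recursion $D_{\mathrm{TV}}(q(X_{t_{j-1}}),p_\theta(X_{t_{j-1}}))\le D_{\mathrm{TV}}(q(X_{t_j}),p_\theta(X_{t_j}))+\mathbb{E}_{q(X_{t_j})}[D_{\mathrm{TV}}(q(X_{t_{j-1}}\mid X_{t_j}),p_\theta(X_{t_{j-1}}\mid X_{t_j}))]$ unrolls into that hybrid. This ordering is what puts the expectations under the forward marginals, where $\varepsilon_j$ is defined. Your fallback with the learned-prefix law would not yield the stated bound.

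One difference remains. The paper uses the marginal reverse kernels $q(X_{t_{j-1}}\mid X_{t_j})$ and applies convexity step by step, whereas you condition on $x$ once, globally. For $j\ge2$ both give the same bound $\tfrac{t_{j-1}^2-t_j^2}{t_{j-1}(1-t_j)^2}\,\varepsilon_j$. Your final term, however, becomes $\mathbb{E}_{q(x,X_{t_1})}[1-p_\theta(X_{t_0}=x\mid X_{t_1})]$. By convexity this dominates the paper's $E_{\mathrm{rec}}=\mathbb{E}_{q(X_{t_1})}[D_{\mathrm{TV}}(q(X_{t_0}\mid X_{t_1}),p_\theta(X_{t_0}\mid X_{t_1}))]$. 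To get the statement with the paper's $E_{\mathrm{rec}}$, first peel off the last step: $D_{\mathrm{TV}}(\pi_0,p_\theta(X_{t_0}))\le E_{\mathrm{rec}}+D_{\mathrm{TV}}(q(X_{t_1}),p_\theta(X_{t_1}))$. Only then run your $x$-conditioned hybrid on the remaining $K-1$ steps, which end at $q(X_{t_1}\mid x)$ and average to $q(X_{t_1})$.
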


Here, $\varepsilon_j$ bounds the expected soft
$\ell_1$ prediction error under the forward distribution,
and $E_{\mathrm{rec}}$ denotes the final reverse-step error.
These bounds distinguish the exact oracle cross-step
posterior from the learned sampler, whose approximation
error depends on prediction accuracy and inference
intervals. Detailed proofs are provided in
Appendix~\ref{app:learned_error_bounds}.

\subsection {Model Training and Sampling}

\begin{algorithm}
\caption{Training procedure.}
\label{alg:training}
\begin{algorithmic}[1]
\State \textbf{Given}: Binary diffusion model $f_\theta$ parametrized by $\mathcal{T}_\theta$; A dataset $\mathbf{X}$.
\State \textbf{Given}: Diffusion steps $T$; Probability path defined by \cref{eq:ot_interpolation}; Training steps $I$.
\State Initializing $\mathcal{T}_\theta$.
\For{Step $i = 1 : I$}
    \State Sampling data $\mathbf{x}_{t_{0}}\sim \mathbf{X}$, and time step $\tau \sim \{1, \ldots, T\}$.
    \State Obtaining $\mathbf{x}_{t_{\tau}}$ using $\mathbf{x}_{t_{0}}$, $t_{\tau}$ , and \textcolor{blue}{probability path with \cref{eq:ot_interpolation}}.
    \State Predicting the probability that the state is $\mathbf{x}_{t_{0}}$ using $f_\theta(\mathbf{x}_{t_{\tau}}, t_{\tau}) = \sigma(\mathcal{T}_\theta(\mathbf{x}_{t_{\tau}}, t_{\tau})/\kappa)$.
    \State Calculating loss $\mathcal{L}$ using \cref{eq:bce_loss}.
    \State Backpropagating $\mathcal{L}$ and updating $\theta$.
\EndFor
\State \textbf{Return} Binary diffusion model $f_\theta$.
\end{algorithmic}
\end{algorithm}

\begin{algorithm}
\caption{Sampling procedure.}
\label{alg:sampling}
\begin{algorithmic}[1]
\State \textbf{Given}: Trained binary diffusion model $f_\theta$;The sample shape dimension $shape$.
\State \textbf{Given}: Diffusion steps $T$; \textcolor{blue}{Probability path defined by \cref{eq:ot_interpolation}}; Temperature $\kappa$.
\State Sampling $\mathbf{x}_{t_{T}} = \text{Bernoulli}(\mathbf{x}^{\text{init}})$, where $\mathbf{x}^{\text{init}} \in \mathbb{R}^{shape}$ and contains 0.5 only.
\For{Step $\tau = T : 1$}
    \State \textcolor{blue}{Predicting $p_\theta(\mathbf{x}_{t_{\tau-1}})$ with $f_\theta(\mathbf{x}_{t_{\tau}}, t_{\tau}) = \sigma(\mathcal{T}_\theta(\mathbf{x}_{t_{\tau}}, t_{\tau})/\kappa)$ and \cref{eq:backward_transition_params}}.
    \State Sampling $\mathbf{x}_{t_{\tau-1}} = \text{Bernoulli}(p_\theta(\mathbf{x}_{t_{\tau-1}}))$
\EndFor
\State $\hat{\mathbf{x}}_{t_{0}} = \mathbf{x}_{t_{\tau-1}}$
\State \textbf{Return} the final sample $\hat{\mathbf{x}}_{t_{0}}$.
\end{algorithmic}
\end{algorithm}

We now describe how BFM can be trained and sampled by minimal modification of existing binary diffusion \cite{wang2023binary} training and sampling architectures, as summarized in Algorithms \ref{alg:training} and \ref{alg:sampling}, where we highlight the differences between BFM and existing binary diffusion models in blue. The prediction of $X_{t_{0}}$ is implemented via a neural network $f_\theta(X_{t_{\tau}}, t_{\tau}) = \sigma(\mathcal{T}_\theta(X_{t_{\tau}}, t_{\tau})/\kappa)$, where $\sigma(\cdot)$ is the sigmoid function, $\mathcal{T}_\theta$ is a neural network with parameters $\theta$, and $\kappa$ is a temperature hyperparameter.
The temperature is used to control the diversity of the generated samples. A smaller temperature leads to less diverse samples, while a larger temperature increases diversity.

\section{Experiments}
\label{experiments}

We evaluate BFM from four perspectives. First, we validate our central claim that BFM maintains stability under \emph{training-free} NFE reduction using a single pretrained checkpoint. Second, we verify its competitiveness on high-dimensional latent image benchmarks under standard sampling. Third, we demonstrate direct generative performance on binary manifolds via Binarized MNIST. Fourth, we conduct an ablation study to analyze the impact of different probability path schedulers. Furthermore, Appendix~\ref{appendix_ising} details BFM's applicability beyond image synthesis by modeling the Ising binary physical system.

\subsection{Experimental Setup}
\label{sec:setup}

To ensure a rigorous and fair comparison with BLD~\cite{wang2023binary}, we implement BFM directly on top of the official BLD codebase. We fully follow the experimental settings of BLD, using the identical full transformer backbone and hyperparameters. All experiments are conducted on two NVIDIA GeForce RTX~4090 GPUs. We use the Adam optimizer with $\beta_1 = 0.9$, $\beta_2 = 0.99$, and $\epsilon = 1 \times 10^{-8}$, together with a warm-up scheduler whose peak learning rate is $2 \times 10^{-4}$ after 10K iterations. Unless otherwise specified, we set the temperature $\kappa$ to $0.9$ as in the default BLD setup, and compute all FID, Precision, and Recall metrics from 50K generated samples compared against the corresponding training datasets.

\subsection{Self-Consistent Low-NFE Sampling}
\label{sec:low_nfe}

We first evaluate the main practical regime targeted by our method: \emph{training-free acceleration}, where the number of function evaluations (NFEs) is reduced at inference time without retraining or distillation. This setting directly tests whether BFM remains stable when the inference grid is coarsened after training.

To this end, we compare BFM and BLD on LSUN Churches $256\times256$ under a fixed-checkpoint protocol. Specifically, both models are trained with 256 sampling steps using the linear probability path scheduler, and we evaluate the 100K-iteration checkpoints while varying only the inference NFE. All other training and evaluation settings follow Sec.~\ref{sec:setup}.

\begin{figure*}[h]
    \centering
    \begin{subfigure}[t]{0.48\textwidth}
        \centering
        \includegraphics[width=\linewidth]{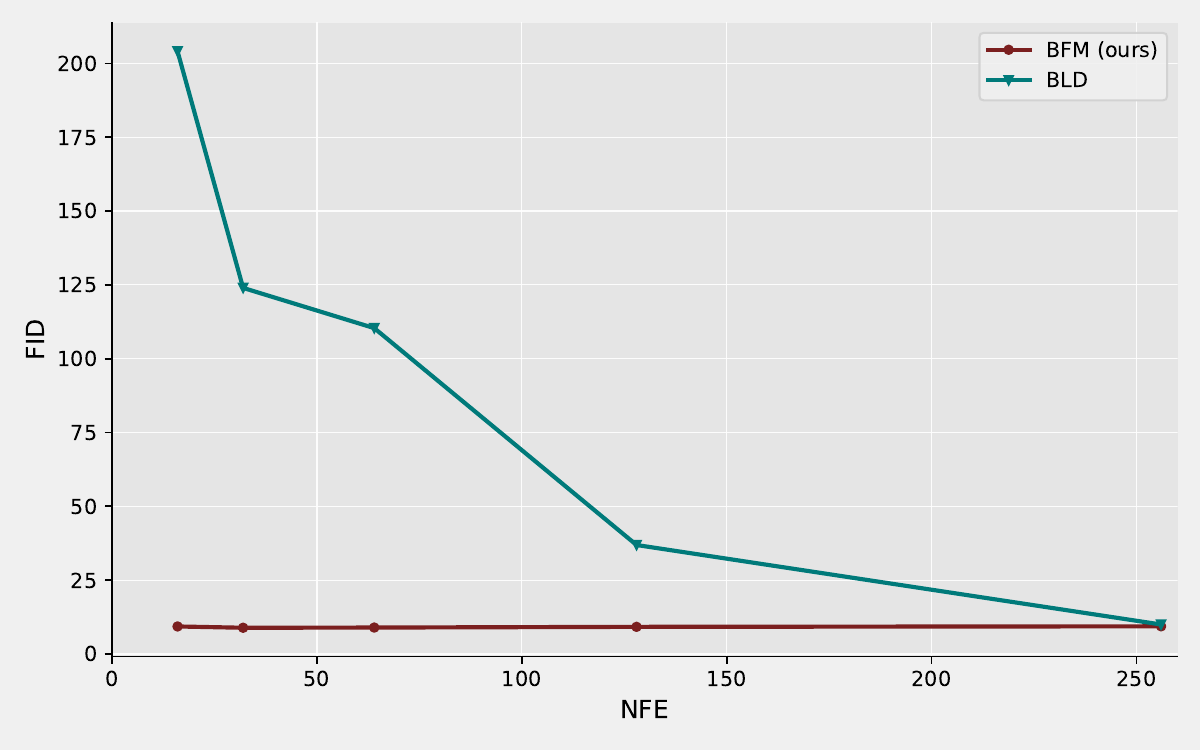}
        \caption{Low-NFE sampling robustness.}
        \label{fig:fid_low_nfe}
    \end{subfigure}
    \hfill
    \begin{subfigure}[t]{0.48\textwidth}
        \centering
        \includegraphics[width=\linewidth]{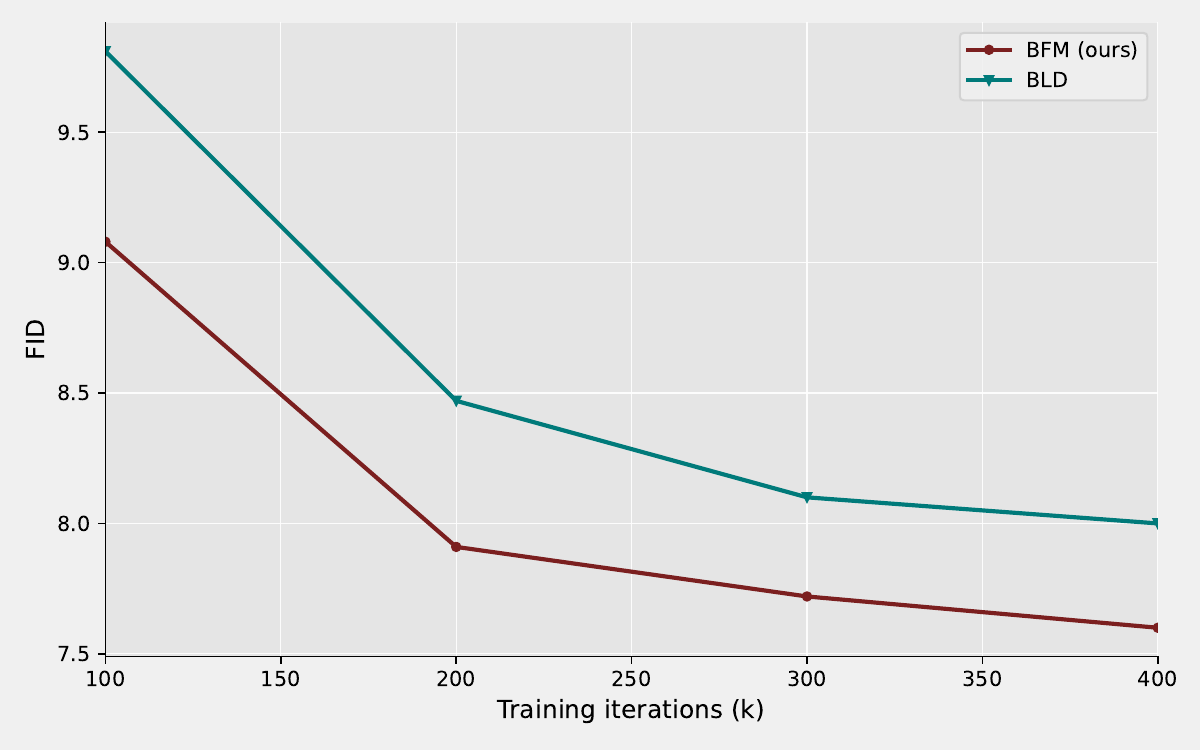}
        \caption{Training dynamics.}
        \label{fig:fid_vs_train_iterations}
    \end{subfigure}
    \caption{
    \textbf{Comparison of BFM and BLD under reduced inference steps and during training.}
    (a) FID versus NFE on LSUN Churches ($256\times256$), evaluated from the 100K-iteration checkpoints trained with 256 sampling steps using the linear probability path scheduler, where only the inference NFE is varied.
    (b) Training convergence comparison on LSUN Bedrooms ($256\times256$). We visualize the FID scores from 100K to 400K training iterations ($\times 10^3$). Both BFM and BLD are trained under identical experimental settings using a linear scheduler. BFM reaches lower FID consistently throughout this range.
    }
    \label{fig:fid_comparison}
\end{figure*}

Fig.~\ref{fig:fid_low_nfe} reports the core low-NFE result. Starting from the same 256-step training setup, BFM remains highly stable as the inference NFE is reduced, whereas BLD deteriorates rapidly under aggressive NFE reduction. In particular, when sampled with only 16 inference steps, BFM achieves an FID of 9.22 under the linear path scheduler, while the discrete baseline BLD severely degrades to 204.10. This result directly supports our central claim that BFM is substantially more robust than chain-based Bernoulli diffusion under training-free NFE reduction.

Fig.~\ref{fig:fid_vs_train_iterations} further compares the training dynamics from 100K to 400K iterations on LSUN Bedrooms $256\times256$. Under identical architectures, schedulers, and optimization settings, BFM consistently reaches lower FID values than BLD throughout this training range. This behavior is consistent with the structural design of BFM. In our framework, the supervision at different sampled time points is induced by the same closed-form Bernoulli probability path, so different sampled times correspond to different intervals of one unified transition family. In contrast, BLD is defined on a fixed one-step discrete chain, where reverse transitions are tied to specific grid steps. This path-consistent parameterization gives BFM a more coherent training signal across time, which empirically translates into faster FID reduction during training.

To make this behavior visually explicit, Fig.~\ref{fig:low_nfe_visual_comparison} compares samples generated from the respective 256-step checkpoints of BLD and BFM under different inference NFEs. As the NFE decreases, BLD quickly becomes blurry and structurally unstable, while BFM preserves global scene structure and semantic layout across a broad range of NFEs. This qualitative evidence is fully consistent with the quantitative FID curve.

\begin{figure*}[htbp]
    \centering
    \begin{subfigure}{0.48\textwidth}
        \centering
        \includegraphics[width=\linewidth]{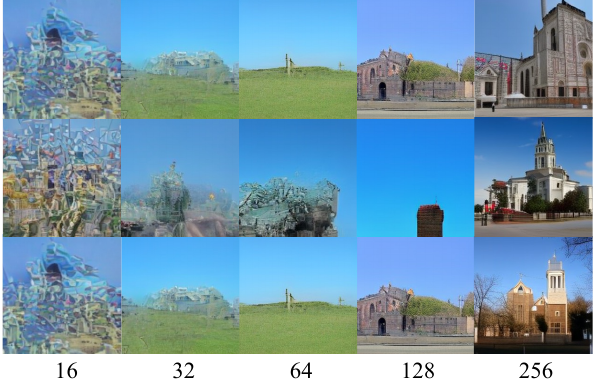}
        \caption{BLD}
        \label{fig:BLD_samples_main}
    \end{subfigure}
    \hfill
    \begin{subfigure}{0.48\textwidth}
        \centering
        \includegraphics[width=\linewidth]{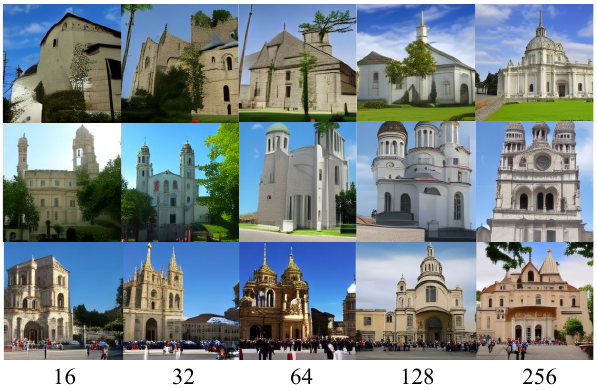}
        \caption{BFM}
        \label{fig:BFM_samples_main}
    \end{subfigure}
    \caption{
    Visual comparison of samples generated with different NFEs from the respective checkpoints of BLD and BFM, both trained with 256 sampling steps on LSUN Churches $256\times256$. For each method, each row corresponds to a different NFE setting, while images in the same column share the same initial latent code.
    }
    \label{fig:low_nfe_visual_comparison}
\end{figure*}

The practical significance of these results is that BFM supports \emph{training-free acceleration} from a single pretrained checkpoint. Reducing the NFE does not require retraining, distillation, or switching to a different sampler. Instead, the model remains stable because the reverse updates can be recomputed on the new coarser inference grid. A more detailed discussion of the corresponding cross-step posterior discrepancy in BLD and the self-consistency mechanism of BFM is deferred to Appendix~\ref{sec:cross_step_consistency}.

\subsection{Standard Sampling Performance on High-Dimensional Image Generation}

We next evaluate BFM under the standard sampling setting to examine its generation quality with the original inference budget. This experiment complements the low-NFE analysis and verifies that BFM remains competitive in the conventional full-step sampling regime.

\begin{table*}[h]
    \centering
    \resizebox{\textwidth}{!}{
    \begin{tabular}{ccccccccccc}
    \hline
    \multicolumn{1}{c}{\multirow{2}{*}{Methods}} & \multicolumn{1}{c}{Steps} & \multicolumn{3}{c}{LSUN-Bedrooms 256x256} & \multicolumn{3}{c}{LSUN-Churches 256x256}  & \multicolumn{3}{c}{FFHQ 256x256}\\ 
    \multicolumn{1}{c}{}          & {}                  & FID \(\downarrow\) & Prec. \(\uparrow\) & Recall \(\uparrow\) & FID \(\downarrow\) & Prec. \(\uparrow\) & Recall \(\uparrow\) & FID \(\downarrow\) & Prec. \(\uparrow\) & Recall \(\uparrow\)   \\ \cline{1-11}
    \textbf{Continuous Generative Models} & {} & {} & {} & {} & {} & {} & {} & {} & {} & {} \\ 
    StyleGAN \cite{karras2019style,karras2020analyzing} 
        & -   
        & \textbf{2.35}    
        & 0.59               
        & \textbf{0.48}  
        & \textbf{3.86}         
        & 0.60               
        & 0.43           
        & \textbf{4.16}  
        & 0.71            
        & \underline{0.46} \\
    LDM-4/8/4 \cite{rombach2022high}         
        & 200            
        & \underline{2.95}             
        & \underline{0.66}               
        & \textbf{0.48}      
        & \underline{4.02}              
        & \underline{0.64}                
        & \underline{0.52}          
        & \underline{4.98}   
        & \textbf{0.73}   
        & \textbf{0.50} \\
    Patch-DM \cite{ding2023patched}          
        & 50             
        & 6.04             
        & 0.56               
        & \underline{0.44}               
        & 5.49              
        & 0.62               
        & \textbf{0.53}  
        & 10.02          
        & 0.68            
        & 0.44 \\
    VQ-LCMD \cite{nguyen2024improving}       
        & 200            
        & 4.16             
        & \textbf{0.72}      
        & 0.40               
        & 4.99              
        & \textbf{0.75}      
        & 0.42           
        & 7.25           
        & \underline{0.72}            
        & \underline{0.46} \\
    \hline
    \textbf{Discrete Generative Models} & {} & {} & {} & {} & {} & {} & {} & {} & {} & {} \\ 
    D3PM \cite{austin2021structured}         
        & 200            
        & 6.60             
        & \underline{0.60}               
        & 0.35               
        & 6.02              
        & \underline{0.68}               
        & \underline{0.39}           
        & \underline{9.49}           
        & \textbf{0.71}            
        & 0.41 \\
    VQ-Diffusion \cite{gu2022vector}         
        & 200            
        & 7.19             
        & 0.54               
        & \textbf{0.37}               
        & 6.88              
        & \textbf{0.72}      
        & 0.37           
        & \textbf{8.79}           
        & \underline{0.70}            
        & 0.43 \\
    BLD*  \cite{wang2023binary}                                    
        & 64             
        & \underline{6.22}             
        & \textbf{0.69}      
        & \underline{0.36}               
        & \underline{5.48}              
        & 0.66      
        & \textbf{0.41} 
        & 11.46          
        & 0.68   
        & \underline{0.48} \\
    \textbf{BFM} (ours)                      
        & 64             
        & \textbf{5.86}    
        & \textbf{0.69}      
        & \textbf{0.37}      
        & \textbf{5.32}     
        & 0.66      
        & \textbf{0.41} 
        & 10.87 
        & 0.68   
        & \textbf{0.49}\\ 
    \hline 
    \end{tabular}
    }
    \caption{Comparison of various methods for image generation on LSUN Bedrooms, LSUN Churches, and FFHQ. All images are of resolution $256\times256$. BLD* denotes our reproduction of BLD using its official codebase under the same machine environment and training configuration as BFM. The reproduced BLD* and BFM results are evaluated from checkpoints trained for 800K iterations. Within each method category, the best and second-best results in each column are highlighted in \textbf{bold} and \underline{underline}, respectively.}
    \label{tab:lsun_bedroom_church_ffhq}
\end{table*}

To evaluate BFM on challenging image synthesis tasks, we use it as a latent-space generator. Following BLD~\cite{wang2023binary}, we employ the pretrained encoder and decoder to map images from LSUN Bedrooms and Churches~\cite{yu2015lsun}, and FFHQ~\cite{karras2019style} into a binary latent space of size $16 \times 16 \times 64$. Detailed implementation specifics of the autoencoder can be found in \cite{wang2023binary}.

A direct comparison to the originally reported BLD numbers is not fully controlled, because the corresponding training checkpoints are not publicly available and the original training environment differs from ours. Therefore, instead of mixing unmatched results in the main comparison, we reproduce BLD using its official codebase under the same machine environment and the same training configuration as BFM, including the backbone, optimizer, learning rate, loss function, and iteration budget. Table~\ref{tab:lsun_bedroom_church_ffhq} reports only this unified comparison. Under this strictly matched setting, BFM consistently outperforms the reproduced BLD across all three datasets (e.g., FID 5.86 vs.\ 6.22 on LSUN Bedrooms and 10.87 vs.\ 11.46 on FFHQ). 

Beyond the quantitative improvements presented in Table~\ref{tab:lsun_bedroom_church_ffhq}, we also provide a qualitative assessment to showcase BFM's ability to generate high-fidelity images across diverse datasets. Representative samples are shown in Fig.~\ref{fig:3_set_samples}. Additionally, Appendix~\ref{appendix_temperature_visualization} provides extended visualizations demonstrating how increasing the sampling temperature shifts the outputs from stable, high-fidelity structures to greater visual diversity.

\begin{figure*}[htbp]
    \centering
    \includegraphics[width=\textwidth]{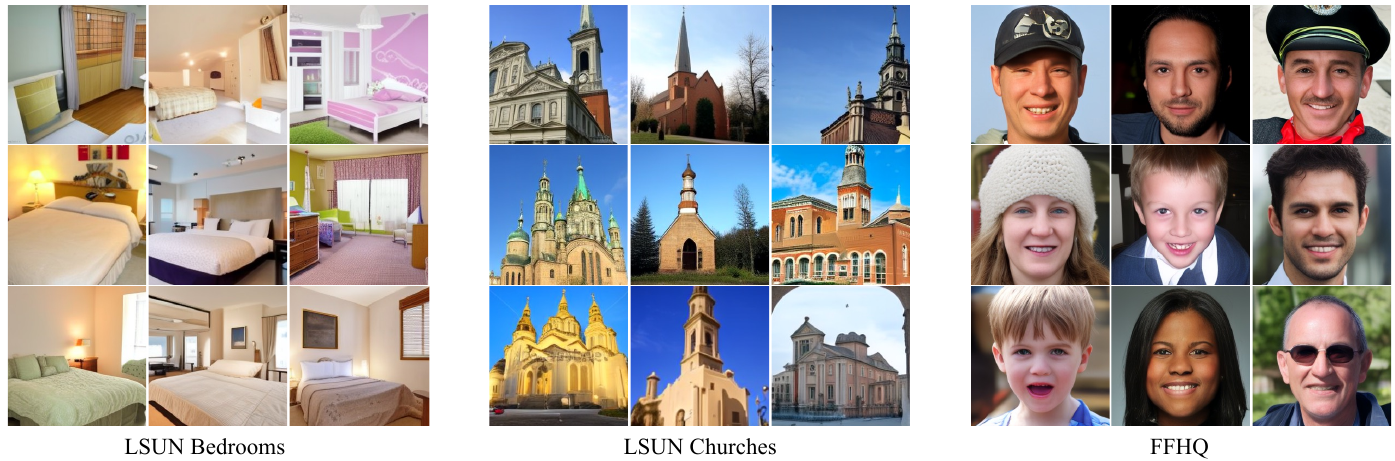}
    \caption{Samples from BFM on LSUN-Bedrooms, LSUN-Churches and FFHQ datasets. All samples resolution are 256x256.}
    \label{fig:3_set_samples}
\end{figure*}

\subsection{Performance on Binarized MNIST}

\begin{figure}[htbp]
    \centering
    \begin{minipage}[c]{0.15\textwidth}
        \centering
        \includegraphics[width=\linewidth]{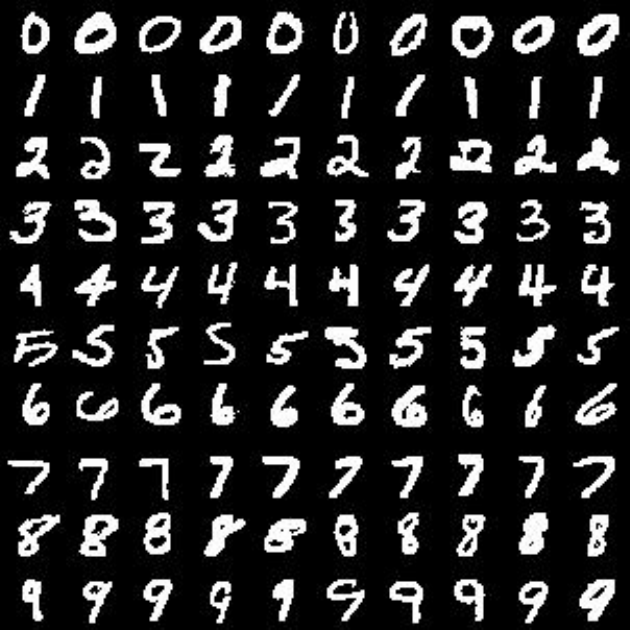}
    \end{minipage}
    \hspace{0.05\textwidth} 
    \begin{minipage}[c]{0.48\textwidth}
        \centering
        \resizebox{\linewidth}{!}{
        \begin{tabular}{|l|l|l|}
        \hline
        Model & Base method & FID \\ \hline
        SFM \cite{cheng2024categorical} & Flow Matching & 4.62 \\ \hline
        $\alpha$-Flow \cite{cheng2025alpha} & Flow Matching & 5.02 \\ \hline
        DMPM \cite{pham2025discrete} & Score-based Diffusion & 2.89 \\ \hline
        CoVAE \cite{silvestri2025covae} & VAE & \underline{0.58} \\ \hline
        \textbf{BFM} (ours) & Flow Matching \& Diffusion & \textbf{0.42} \\ \hline
        \end{tabular}
        }
    \end{minipage}
    
    \vspace{2mm} 
    \caption{Left: Binarized MNIST samples generated by BFM. Right: Comparison of FID scores. The best and second-best results are highlighted in \textbf{bold} and \underline{underline}, respectively.}
    \label{fig:mnist_performance}
\end{figure}

To validate the generative performance of BFM on high-dimensional binary manifolds, we conduct experiments on the binarized MNIST~\cite{deng2012mnist} dataset. Following standard practices, we apply a thresholding scheme where pixels greater than 0 are set to 1, and others are set to 0. We evaluate the performance of BFM and report the FID scores in Figure~\ref{fig:mnist_performance}, comparing it with other state-of-the-art methods. 

As shown, BFM achieves the best FID score of 0.42 among the evaluated models. The FID metric used to evaluate our generated binarized MNIST samples strictly follows the codebase provided in the SFM~\cite{cheng2024categorical} repository. We visualize the generated high-quality samples on the left side of Figure~\ref{fig:mnist_performance}.

\subsection{Ablation Study on Bernoulli Probability Path Scheduling Strategies}
Fig. \ref{fig:noise_scheduler_paths} visualizes how, under different probability paths, the probabilities associated with 1 (blue curves) and 0 (red curves) are progressively driven towards 0.5 along the diffusion trajectory. To systematically assess the impact of these schedulers on model performance, we select 3 different probability path schedulers and conduct experiments on the LSUN Churches 256×256 dataset using the official BLD codebase. We keep the network architecture, learning rate, loss function, and all optimization hyperparameters exactly the same, and train with a batch size of 96. The only difference lies in the algorithm at the training and sampling stages.

\begin{figure*}[htbp]
    \centering
    \begin{subfigure}{0.30\textwidth}  
        \centering
        \includegraphics[width=\linewidth]{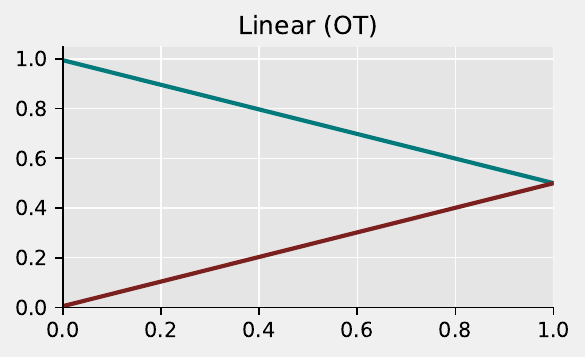}
        \caption{Linear}
        \label{fig:linear}
    \end{subfigure}
    \hfill
    \begin{subfigure}{0.30\textwidth}  
        \centering
        \includegraphics[width=\linewidth]{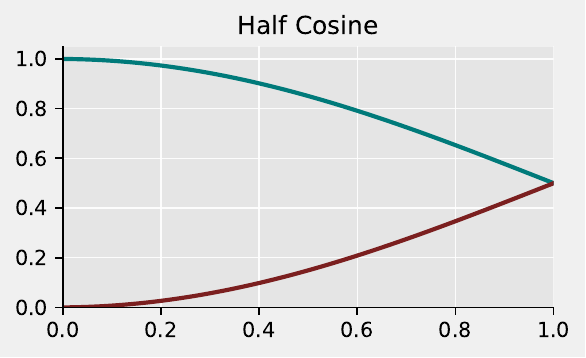}
        \caption{Half cos}
        \label{fig:Half cosine}
    \end{subfigure}
    \hfill
    \begin{subfigure}{0.30\textwidth}  
        \centering
        \includegraphics[width=\linewidth]{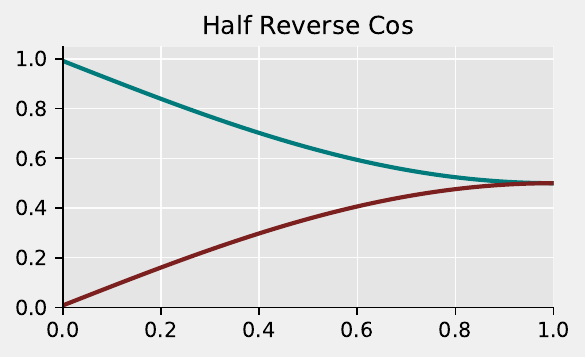}
        \caption{Reverse half cos}
        \label{fig:half_reverse_cosine}
    \end{subfigure}
    \hfill

    \caption{Bernoulli probability path schedulers.}
    \label{fig:noise_scheduler_paths}
\end{figure*}

\begin{table}[htbp]
    \centering
    \caption{FID comparison of different Bernoulli probability path schedulers for BFM and BLD on LSUN Churches 256$\times$256, evaluated at 100K training iterations.}
    \label{tab:fid_results}
    \begin{tabular}{lcc}
        \toprule
        \textbf{Scheduler} & \multicolumn{2}{c}{\textbf{FID} $\downarrow$} \\
        \cmidrule(lr){2-3}
                           & \textbf{BFM} & \textbf{BLD} \\
        \midrule
        Linear             & \textbf{9.22}         & 9.55         \\
        Half-cosine        & 9.41         & 9.53         \\
        Reverse half-cos   & 9.27         & 9.54         \\
        \bottomrule
    \end{tabular}
\end{table}

We summarize the ablation results in Table~\ref{tab:fid_results}. 
Under all three probability paths (linear, half-cosine, and reverse half-cosine), BFM consistently achieves lower FID than BLD. Among them, the linear scheduler gives the best overall performance for BFM (9.22). We also note that the FID of BFM varies slightly more across different schedulers than that of BLD, although the overall variation remains small. This indicates that while the specific choice of scheduling function has a measurable impact on the final generation quality, BFM maintains stable performance across different probability paths. Consequently, given its superior empirical performance, we adopt the linear schedule as the default configuration for all subsequent BFM experiments.

\section{Discussion}
\label{discussion}
We presented Bernoulli Flow Models (BFM), a self-consistent generative framework that defines a continuous global Bernoulli probability path between data and noise and derives analytical posterior transitions over arbitrary time intervals. This construction decouples inference from a fixed discrete grid: when the NFE changes, BFM re-evaluates the posterior on the new intervals rather than approximating skipped one-step transitions. Experiments show robust training-free low-NFE sampling while remaining competitive under standard inference. Our evaluation is currently limited to representative image-generation and scientific binary benchmarks, leaving broader domain-specific validation for future work. We also plan to combine BFM with distillation or consistency-style training for very-few-step or one-step generation.

\acksection
This work is supported by the Science and Technology Development Fund of Macau Project (grant
No. 0096/2023/RIA2, 0044/2024/AGJ, 0140/2024/AGJ, 0084/2024/RIB2), and is partially supported
by the AI Supercomputing Center of Macau University of Science and Technology (MUST). We
acknowledge the technical support provided by the Information Technology Development Office of
MUST. We thank the anonymous reviewers for their constructive feedback and valuable suggestions. The authors declare no competing interests.

\clearpage  
{
\bibliographystyle{plain}  
\bibliography{ref}         
}

\clearpage  
\appendix

  \section{Proofs of Bernoulli Probability Flow Dynamics}
  \label{appendix_proofs}

  In this section, we provide the detailed and rigorous proofs for the Bernoulli Probability Flow Dynamics introduced in Section \ref{sec:method}. 

  \subsection{Mathematical Preliminaries}

  We first formalize the core algebraic operator used to simplify the binary probability space.

  \begin{definition}[Symmetric Binary Combination]
  We define the symmetric combination function $g: [0,1] \times [0,1] \to [0,1]$ as:
  \begin{equation}
      g(a,b) = a(1-b) + b(1-a).
  \end{equation}
  \end{definition}

  \subsection{Marginal Distribution Dynamics}

  We begin by establishing the marginal distribution from the data $X_{t_0}$ to an arbitrary state $X_{t_\tau}$.

  \begin{proposition}[Marginal Probability]
  \label{prop:marginal}
  Let $X_{t_0} \in \{0,1\}$ be a data sample and $X_1 \sim \mathcal{B}(0.5)$ be a uniform prior. For time parameters $t_\tau \in [0,1]$ evaluated along a decreasing schedule, let $X_{t_\tau}$ be defined conditionally as $X_{t_\tau} \sim \mathcal{B}\left( t_\tau X_{t_0} + (1 - t_\tau) X_1 \right)$. The exact marginal distribution is given by:
  \begin{equation}
      q(X_{t_\tau} \mid X_{t_0}) = \mathcal{B}\left(X_{t_\tau}; g\left(X_{t_{0}}, \frac{1-t_{\tau}}{2}\right)\right).
  \end{equation}
  \end{proposition}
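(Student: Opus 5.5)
The plan is to compute the marginal directly by the law of total probability, averaging over the independent prior variable $X_1 \sim \mathcal{B}(0.5)$. Fix $X_{t_0}\in\{0,1\}$. Conditionally on $(X_{t_0},X_1)$, the state $X_{t_\tau}$ is Bernoulli with parameter $p(X_1)=t_\tau X_{t_0}+(1-t_\tau)X_1$. Because $X_{t_\tau}$ is binary, its conditional law given $X_{t_0}$ is a Bernoulli distribution, and that law is determined by the single number $P(X_{t_\tau}=1\mid X_{t_0})$. We compute this as
\[
P(X_{t_\tau}=1\mid X_{t_0})=\mathbb{E}_{X_1}\!\left[t_\tau X_{t_0}+(1-t_\tau)X_1\right]=t_\tau X_{t_0}+\frac{1-t_\tau}{2},
\]
using linearity of expectation and $\mathbb{E}[X_1]=1/2$. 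This uses the fact that the mixture of Bernoulli laws is Bernoulli with the averaged parameter. We also need independence of $X_1$ from $X_{t_0}$, which is implicit in the construction of Eq.~\ref{eq:ot_interpolation}; I will state it explicitly.

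Next we identify this parameter with $g\!\left(X_{t_0},\frac{1-t_\tau}{2}\right)$. Write $b=\frac{1-t_\tau}{2}$. By definition, $g(a,b)=a(1-b)+b(1-a)=a(1-2b)+b$, and here $1-2b=t_\tau$. Hence $g(X_{t_0},b)=t_\tau X_{t_0}+\frac{1-t_\tau}{2}$, which matches the expression above. As a sanity check we can split on cases: for $X_{t_0}=1$ the parameter is $\frac{1+t_\tau}{2}=1-b$, and for $X_{t_0}=0$ it is $b$. This confirms the reading of $g$ as a flip with probability $b$.

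No step is a real obstacle. The only point needing care is justifying that integrating out $X_1$ preserves the Bernoulli form, i.e.\ that $\sum_{x_1}P(X_1=x_1)\,\mathcal{B}(x;p(x_1))=\mathcal{B}\bigl(x;\sum_{x_1}P(X_1=x_1)p(x_1)\bigr)$ for $x\in\{0,1\}$. This holds because $\mathcal{B}(x;p)=px+(1-p)(1-x)$ is affine in $p$. I will also note that the parameter lies in $[0,1]$ for every $t_\tau\in[0,1]$, so the Bernoulli law is well defined throughout.
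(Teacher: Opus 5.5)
Your proposal is correct and follows essentially the same route as the paper's proof: average the Bernoulli parameter over $X_1 \sim \mathcal{B}(0.5)$ by linearity of expectation to get $t_\tau X_{t_0} + \tfrac{1-t_\tau}{2}$, then check algebraically that this equals $g\!\left(X_{t_0},\tfrac{1-t_\tau}{2}\right)$. You also make explicit three points the paper leaves implicit: the independence of $X_1$ from $X_{t_0}$, the fact that a mixture of Bernoulli laws is Bernoulli because $\mathcal{B}(x;p)$ is affine in $p$, and the well-definedness of the parameter.
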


  \begin{proof}
  We aim to derive the conditional distribution $q(X_{t_\tau} \mid X_{t_0})$ by marginalizing over the uniform prior $X_1 \sim \mathcal{B}(0.5)$. By definition, the Bernoulli parameter for the intermediate state is $p = t_\tau X_{t_0} + (1 - t_\tau) X_1$. Using the linearity of expectation, the marginal probability of observing state 1 is computed directly as:
  \begin{align*}
  \mathbb{P}(X_{t_\tau} = 1 \mid X_{t_0}) &= \mathbb{E}_{X_1} \left[ t_\tau X_{t_0} + (1 - t_\tau) X_1 \right] \\
  &= t_\tau X_{t_0} + (1 - t_\tau) \cdot \mathbb{E}[X_1] \\
  &= t_\tau X_{t_0} + 0.5(1 - t_\tau).
  \end{align*}
  Notice that this compact expression naturally evaluates to $0.5(1 - t_\tau)$ when $X_{t_0} = 0$, and to $0.5(1 + t_\tau)$ when $X_{t_0} = 1$. 

  To unify this parameter into a symmetric form, we algebraically rewrite the expression as:
  \begin{equation}
  \alpha_{t_\tau} := \mathbb{P}(X_{t_\tau} = 1 \mid X_{t_0}) = \frac{1 + (2X_{t_0} - 1)t_\tau}{2}.
  \label{eq:appendix_alpha_tau}
  \end{equation}
  It is straightforward to algebraically verify that $g\left(X_{t_{0}}, \frac{1-t_{\tau}}{2}\right) = X_{t_0}\left(1 - \frac{1-t_\tau}{2}\right) + \frac{1-t_\tau}{2}(1 - X_{t_0})$, which simplifies exactly to $\frac{1 + (2X_{t_0} - 1)t_\tau}{2}$. Therefore, the marginal probability is exactly parameterized by $\alpha_{t_\tau} = g\left(X_{t_{0}}, \frac{1-t_{\tau}}{2}\right)$.
  \end{proof}

  \subsection{One-Step Forward Transition Dynamics}

  We now prove the transition probability between two sequential states $X_{t_{\tau - 1}}$ and $X_{t_\tau}$.

  \begin{proposition}[Forward Transition Probability]
  \label{prop:forward}
  Let $0\leq t_\tau<t_{\tau-1}\leq1$, so that $t_{\tau-1}>0$. Assuming a Markovian transition process from $X_{t_{\tau-1}}$ to $X_{t_\tau}$ with an error rate $\gamma_{t_\tau}$, the transition probability is uniquely specified by:
  \begin{equation}
  q(X_{t_\tau} \mid X_{t_{\tau-1}}) = \mathcal{B}\left(X_{t_\tau}; g(X_{t_{\tau-1}}, \gamma_{t_\tau})\right), \quad \text{where} \quad \gamma_{t_{\tau}} = \frac{0.5(t_{\tau-1} - t_{\tau})}{t_{\tau-1}}.
  \end{equation}
  \end{proposition}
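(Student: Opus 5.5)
The approach is to derive $\gamma_{t_\tau}$ by \emph{matching marginals}: a binary symmetric channel with flip probability $\gamma$, applied to the marginal of $X_{t_{\tau-1}}$ from Proposition~\ref{prop:marginal}, must reproduce the marginal of $X_{t_\tau}$ given by the same proposition. Throughout, $X_{t_0}\in\{0,1\}$ is held fixed. By the Markov assumption, $X_{t_\tau}$ depends on $X_{t_0}$ only through $X_{t_{\tau-1}}$. We also take the transition to be a symmetric flip channel: it flips $X_{t_{\tau-1}}$ with probability $\gamma$ independently of its value, which is exactly the form $\mathcal{B}(X_{t_\tau};g(X_{t_{\tau-1}},\gamma))$. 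This is the natural reading of ``error rate $\gamma_{t_\tau}$'' in the statement.

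\textbf{Step 1: compose two channels.} The key algebraic fact is that $g$ composes Bernoulli parameters. If $Y\sim\mathcal{B}(a)$ and $Z$ is $Y$ flipped with probability $b$, then $Z\sim\mathcal{B}(g(a,b))$. Hence
\begin{equation*}
\mathbb{P}(X_{t_\tau}=1\mid X_{t_0})=\sum_{x}q(X_{t_\tau}=1\mid X_{t_{\tau-1}}=x)\,q(X_{t_{\tau-1}}=x\mid X_{t_0})=g(\alpha_{t_{\tau-1}},\gamma).
\end{equation*}
By the closed form $\alpha_t=\tfrac{1+(2X_{t_0}-1)t}{2}$ from~\eqref{eq:appendix_alpha_tau}, this requires
\begin{equation*}
g\!\left(\tfrac{1+(2X_{t_0}-1)t_{\tau-1}}{2},\gamma\right)=\tfrac{1+(2X_{t_0}-1)t_\tau}{2}.
\end{equation*}

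\textbf{Step 2: solve for $\gamma$.} Rewrite $g$ in centered form, $g(a,b)-\tfrac12=(2a-1)(\tfrac12-b)$. Writing $s=2X_{t_0}-1\in\{\pm1\}$, the left-hand side minus $\tfrac12$ becomes $s\,t_{\tau-1}(\tfrac12-\gamma)$, and the right-hand side minus $\tfrac12$ is $\tfrac{s\,t_\tau}{2}$. Since $s\neq0$ and $t_{\tau-1}>0$, we can divide through, giving $(1-2\gamma)=t_\tau/t_{\tau-1}$. Therefore $\gamma=\tfrac{0.5(t_{\tau-1}-t_\tau)}{t_{\tau-1}}$. The same $\gamma$ works for both values of $X_{t_0}$, which is what makes the transition independent of the data and hence a legitimate Markov kernel. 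The solution is unique because the equation is affine in $\gamma$ with nonzero slope. Finally, $0\le t_\tau<t_{\tau-1}$ gives $\gamma\in(0,\tfrac12]$, so the kernel is a valid probability.

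\textbf{Main obstacle.} The difficulty is not algebraic but conceptual. The path defined in~\eqref{eq:ot_interpolation} only specifies one-time marginals; it does not by itself define a joint process. The plan is therefore to state explicitly that we \emph{construct} the Markov chain with symmetric flip kernels. We then verify consistency of this construction by checking two things. First, the chain's marginals agree with Proposition~\ref{prop:marginal}, which is Step 1. Second, successive kernels compose correctly: flipping with $\gamma_{t|s}$ and then $\gamma_{u|t}$ yields $\gamma_{u|s}$, since the factors multiply, $(1-2\gamma)$ being $t/s$ times $u/t$. This composition property is also what later justifies Theorem~\ref{theorem:cross_step}.
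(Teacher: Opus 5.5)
Your proposal is correct and follows essentially the same route as the paper: it imposes the law-of-total-probability constraint $\alpha_{t_\tau}=g(\alpha_{t_{\tau-1}},\gamma_{t_\tau})$, solves the resulting affine equation for $\gamma_{t_\tau}$, and notes that the factor $2X_{t_0}-1$ cancels, so the kernel does not depend on the data. Your centered form of $g$, the explicit uniqueness and validity checks ($\gamma_{t_\tau}\in(0,\tfrac12]$), and the semigroup identity $(1-2\gamma_{t\mid s})(1-2\gamma_{u\mid t})=1-2\gamma_{u\mid s}$ are useful additions that the paper leaves implicit; the last one shows directly that the cross-step kernel in Theorem~\ref{theorem:cross_step} equals the composition of one-step kernels.
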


  \begin{proof}
  Let $\alpha_{t_\tau}$ and $\alpha_{t_{\tau-1}}$ be the marginal probabilities of state 1 at their respective time steps, defined via Eq.~\ref{eq:appendix_alpha_tau}. By the law of total probability, the marginal probability $\alpha_{t_\tau}$ must aggregate all paths from the previous state:
  \begin{equation}
  \alpha_{t_\tau} = \sum_{x \in \{0,1\}} q(X_{t_\tau} = 1 \mid X_{t_{\tau-1}} = x) \cdot q(X_{t_{\tau-1}} = x \mid X_{t_0}).
  \end{equation}
  Substituting the parameterized transition probabilities yields:
  \begin{equation}
  \alpha_{t_\tau} = \alpha_{t_{\tau-1}}(1 - \gamma_{t_\tau}) + (1 - \alpha_{t_{\tau-1}})\gamma_{t_\tau}.
  \end{equation}
  Rearranging the terms and solving for the flip probability $\gamma_{t_\tau}$:
  \begin{equation}
  \alpha_{t_\tau} = \alpha_{t_{\tau-1}} - 2\gamma_{t_\tau}\alpha_{t_{\tau-1}} + \gamma_{t_\tau} \implies \gamma_{t_\tau} = \frac{\alpha_{t_{\tau-1}} - \alpha_{t_\tau}}{2\alpha_{t_{\tau-1}} - 1}.
  \end{equation}
  Substituting $\alpha_{t_\tau} = \frac{1 + (2X_{t_0} - 1)t_\tau}{2}$ and $\alpha_{t_{\tau-1}} = \frac{1 + (2X_{t_0} - 1)t_{\tau-1}}{2}$ into the equation exactly cancels out the data-dependent term $(2X_{t_0} - 1)$, resulting in:
  \begin{equation}
  \gamma_{t_{\tau}} = \frac{\frac{1}{2}(2X_{t_0} - 1)(t_{\tau-1} - t_\tau)}{(2X_{t_0} - 1)t_{\tau-1}} = \frac{0.5(t_{\tau-1} - t_{\tau})}{t_{\tau-1}}.
  \label{eq:t_gamma_t_tau}
  \end{equation}
  This confirms the exact functional form of the forward transition.
  \end{proof}

  \subsection{Analytical One-Step Posterior Derivation}

  \begin{proposition}[One-Step Closed-Form Posterior]
  \label{prop:one_step_posterior}
  The ground-truth posterior distribution of $X_{t_{\tau-1}}$ conditioned on $X_{t_\tau}$ and the source data $X_{t_0}$ admits the exact closed-form solution:
  \begin{equation}
      q(X_{t_{\tau-1}} \mid X_{t_{\tau}}, X_{t_{0}}) =  \mathcal{B}\left(X_{t_{\tau-1}}; \frac{g(X_{t_{\tau}},\gamma_{t_{\tau}})\alpha_{t_{\tau-1}}}{g(g(X_{t_{\tau}},\gamma_{t_{\tau}}),1-\alpha_{t_{\tau-1}})}\right).
  \end{equation}
  \end{proposition}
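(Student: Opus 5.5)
The plan is a direct Bayes-rule computation for the single-coordinate case $x\in\{0,1\}$, exactly as set up in Eq.~\ref{eq:bayes}. We then rewrite each factor in terms of the operator $g$ from the Symmetric Binary Combination definition. Two inputs are needed. The prior factor $q(X_{t_{\tau-1}}=x\mid X_{t_0})$ comes from Proposition~\ref{prop:marginal}, which gives $\mathbb{P}(X_{t_{\tau-1}}=1\mid X_{t_0})=\alpha_{t_{\tau-1}}$. The likelihood factor comes from Proposition~\ref{prop:forward}, which gives $q(X_{t_\tau}\mid X_{t_{\tau-1}})=\mathcal{B}(X_{t_\tau};g(X_{t_{\tau-1}},\gamma_{t_\tau}))$. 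We use the Markov property of the constructed path, namely $q(X_{t_\tau}\mid X_{t_{\tau-1}},X_{t_0})=q(X_{t_\tau}\mid X_{t_{\tau-1}})$, to justify the form of Eq.~\ref{eq:bayes}. We take it as part of the construction, as the paper does.

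The key observation is a reading of the likelihood as a function of the previous state. Since $g(1,\gamma)=1-\gamma$ and $g(0,\gamma)=\gamma$, we have
\[
q(X_{t_\tau}\mid X_{t_{\tau-1}}=1)=g(X_{t_\tau},\gamma_{t_\tau})=:h .
\]
Symmetrically, $q(X_{t_\tau}\mid X_{t_{\tau-1}}=0)=1-h$. I would verify both identities by checking the four cases $X_{t_\tau},X_{t_{\tau-1}}\in\{0,1\}$.

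With these, the numerator of Eq.~\ref{eq:bayes} at $x=1$ is $h\,\alpha_{t_{\tau-1}}$. The denominator is $h\,\alpha_{t_{\tau-1}}+(1-h)(1-\alpha_{t_{\tau-1}})$. Expanding the definition gives $g(h,1-\alpha_{t_{\tau-1}})=h\alpha_{t_{\tau-1}}+(1-\alpha_{t_{\tau-1}})(1-h)$, which is exactly this denominator. This yields the stated Bernoulli parameter. The value at $x=0$ is then automatically its complement, so the posterior is Bernoulli with that parameter.

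The only step requiring care is well-definedness, i.e.\ that the denominator is strictly positive. Since $t_\tau<t_{\tau-1}$, we get $\gamma_{t_\tau}\in(0,\tfrac12]$, so $h\in(0,1)$. Hence $h\alpha+(1-h)(1-\alpha)$ is a convex combination of $\alpha$ and $1-\alpha$ with strictly positive weights, and it is therefore positive for every $\alpha\in[0,1]$. This includes the degenerate endpoint $t_{\tau-1}=1$, where $\alpha\in\{0,1\}$. I expect no genuine obstacle beyond this bookkeeping.
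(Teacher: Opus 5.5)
Your proposal is correct and matches the paper's proof step for step. Both apply Bayes' rule, read the likelihood as $g(X_{t_\tau},\gamma_{t_\tau})$ and $1-g(X_{t_\tau},\gamma_{t_\tau})$, take the prior $\alpha_{t_{\tau-1}}$ from the marginal, and fold the denominator into $g\bigl(g(X_{t_\tau},\gamma_{t_\tau}),1-\alpha_{t_{\tau-1}}\bigr)$; your four-case check of the likelihood identity and your positivity argument for the denominator (via $\gamma_{t_\tau}\in(0,\tfrac12]$) are small additions the paper leaves implicit.
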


  \begin{proof}
  Our objective is to find the analytical Bernoulli parameter for the reverse transition, which requires calculating the probability of $X_{t_{\tau-1}} = 1$ using Bayes' theorem:
  \begin{equation}
  q(X_{t_{\tau-1}}=1 \mid X_{t_\tau}, X_{t_0}) = 
  \frac{q(X_{t_\tau} \mid X_{t_{\tau-1}} = 1) \cdot q(X_{t_{\tau-1}} = 1 \mid X_{t_0})}{\sum_{x \in \{0,1\}} q(X_{t_\tau} \mid X_{t_{\tau-1}} = x) \cdot q(X_{t_{\tau-1}} = x \mid X_{t_0})}.
  \label{eq:appendix_bayes_onestep}
  \end{equation}
  Based on the dynamics established in Propositions \ref{prop:marginal} and \ref{prop:forward}, we define the constituent transition probabilities:
  \begin{equation}
  \begin{aligned}
      q(X_{t_\tau} \mid X_{t_{\tau-1}} = 1) &= g(X_{t_\tau}, \gamma_{t_\tau}), \quad & q(X_{t_{\tau-1}} = 1 \mid X_{t_0}) &= \alpha_{t_{\tau-1}}, \\
      q(X_{t_\tau} \mid X_{t_{\tau-1}} = 0) &= 1 - g(X_{t_\tau}, \gamma_{t_\tau}), \quad & q(X_{t_{\tau-1}} = 0 \mid X_{t_0}) &= 1 - \alpha_{t_{\tau-1}}.
  \end{aligned}
  \end{equation}
  Substituting these precise terms into the fractional Bayes' formulation in Eq.~\ref{eq:appendix_bayes_onestep}, we obtain:
  \begin{equation}
  q(X_{t_{\tau-1}} = 1 \mid X_{t_\tau}, X_{t_0}) = \frac{g(X_{t_{\tau}},\gamma_{t_{\tau}}) \alpha_{t_{\tau-1}}}{g(X_{t_{\tau}},\gamma_{t_{\tau}}) \alpha_{t_{\tau-1}} + \left(1 - g(X_{t_{\tau}},\gamma_{t_{\tau}})\right)(1 - \alpha_{t_{\tau-1}})}.
  \end{equation}
  By recognizing the symmetric structure of the denominator, we apply the definition $g(a,b) = a(1-b) + b(1-a)$, which folds the denominator perfectly into $g\left(g(X_{t_{\tau}},\gamma_{t_{\tau}}),1-\alpha_{t_{\tau-1}}\right)$. Thus, the final expression simplifies to:
  \begin{equation}
  q(X_{t_{\tau-1}}=1 \mid X_{t_\tau}, X_{t_0}) = \frac{g(X_{t_{\tau}},\gamma_{t_{\tau}})\alpha_{t_{\tau-1}}}{g(g(X_{t_{\tau}},\gamma_{t_{\tau}}),1-\alpha_{t_{\tau-1}})},
  \end{equation}
  concluding the derivation of the one-step posterior.
  \end{proof}

  \subsection{Exact Cross-Step Posterior (Proof of Theorem \ref{theorem:cross_step})}

  Finally, we extend the one-step derivation to any arbitrary interval to prove the fast-sampling mechanism introduced in the main text. 

  \textbf{Theorem \ref{theorem:cross_step} (Exact Cross-Step Posterior, restated).} \textit{For any reverse interval satisfying $0\leq t_\tau<t_{\tau-k}\leq1$ (and hence $t_{\tau-k}>0$), where $k>1$, the exact cross-step posterior of the Bernoulli Flow Model admits a perfect closed-form solution:}
  \begin{equation}
      q(X_{t_{\tau-k}} \mid X_{t_{\tau}}, X_{t_{0}}) =  \mathcal{B}\left(X_{t_{\tau-k}}; \frac{g(X_{t_{\tau}},\gamma_{t_{\tau} | t_{\tau-k}})\alpha_{t_{\tau-k}}}{g(g(X_{t_{\tau}},\gamma_{t_{\tau} | t_{\tau-k}}),1-\alpha_{t_{\tau-k}})}\right).
  \end{equation}

  \begin{proof}
  Because our parameters $\alpha$ and $\gamma$ are defined continuously across the interpolation path, the algebraic relationships proven in Propositions \ref{prop:marginal} and \ref{prop:forward} hold for any arbitrary temporal stride. To jump from $t_\tau$ directly back to $t_{\tau-k}$, we generalize the transition error rate over this specific interval as $\gamma_{t_{\tau} | t_{\tau-k}} = \frac{0.5(t_{\tau-k} - t_{\tau})}{t_{\tau-k}}$.

  Using Bayes' theorem, the probability of $X_{t_{\tau-k}} = 1$ is calculated exactly as:
  \begin{equation}
  q(X_{t_{\tau-k}}=1 \mid X_{t_\tau}, X_{t_0}) = 
  \frac{q(X_{t_\tau} \mid X_{t_{\tau-k}} = 1) \cdot q(X_{t_{\tau-k}} = 1 \mid X_{t_0})}{\sum_{x \in \{0,1\}} q(X_{t_\tau} \mid X_{t_{\tau-k}} = x) \cdot q(X_{t_{\tau-k}} = x \mid X_{t_0})}.
  \label{eq:appendix_bayes_crossstep}
  \end{equation}
  Substituting the generalized constituent parameters for the $k$-step stride:
  \begin{equation}
  \begin{aligned}
      q(X_{t_\tau} \mid X_{t_{\tau-k}} = 1) &= g(X_{t_\tau}, \gamma_{t_\tau | t_{\tau-k}}), \quad & q(X_{t_{\tau-k}} = 1 \mid X_{t_0}) &= \alpha_{t_{\tau-k}}, \\
      q(X_{t_\tau} \mid X_{t_{\tau-k}} = 0) &= 1 - g(X_{t_\tau}, \gamma_{t_\tau | t_{\tau-k}}), \quad & q(X_{t_{\tau-k}} = 0 \mid X_{t_0}) &= 1 - \alpha_{t_{\tau-k}}.
  \end{aligned}
  \end{equation}
  Inserting these into Eq.~\ref{eq:appendix_bayes_crossstep} yields the expanded fraction:
  \begin{equation}
  q(X_{t_{\tau-k}} = 1 \mid X_{t_\tau}, X_{t_0}) = \frac{g(X_{t_{\tau}},\gamma_{t_{\tau} | t_{\tau-k}}) \alpha_{t_{\tau-k}}}{g(X_{t_{\tau}},\gamma_{t_{\tau} | t_{\tau-k}}) \alpha_{t_{\tau-k}} + \left(1 - g(X_{t_{\tau}},\gamma_{t_{\tau} | t_{\tau-k}})\right)(1 - \alpha_{t_{\tau-k}})}.
  \end{equation}
  Applying the symmetric operator definition $g(a,b) = a(1-b) + b(1-a)$ to the denominator precisely folds it into $g\left(g(X_{t_{\tau}},\gamma_{t_{\tau} | t_{\tau-k}}),1-\alpha_{t_{\tau-k}}\right)$. Consequently, the analytical cross-step posterior is cleanly obtained:
  \begin{equation}
  q(X_{t_{\tau-k}}=1 \mid X_{t_\tau}, X_{t_0}) = \frac{g(X_{t_{\tau}},\gamma_{t_{\tau} | t_{\tau-k}})\alpha_{t_{\tau-k}}}{g(g(X_{t_{\tau}},\gamma_{t_{\tau} | t_{\tau-k}}),1-\alpha_{t_{\tau-k}})},
  \end{equation}
  which establishes the structural consistency of the oracle posterior under changes of the inference grid. The learned sampler remains subject to the prediction error analyzed in the following subsection.
  \end{proof}

\subsection{Error Bounds for Learned Reverse Sampling}
\label{app:learned_error_bounds}

We provide the proofs of Propositions~
\ref{prop:learned_error} and \ref{prop:accumulated_error}.
We first establish the coordinate-wise transition bounds,
then extend them to the full binary state and the
accumulated sampling error.

\subsubsection{Prediction-Induced TV Bound}

Consider an arbitrary reverse interval
$0\leq t<s<1$, where $s=t_{\tau-k}$ and
$t=t_\tau$. Following Eq.~\ref{eq:cross_step_gamma},
the transition flip probability is
\begin{equation}
\gamma_{t\mid s}=\frac{s-t}{2s}.
\end{equation}

For a fixed noisy state $X_t$, we express the
analytical posterior parameter as a function of
the clean-state input $u\in[0,1]$:
\begin{equation}
r(u)=
\frac{
g(X_t,\gamma_{t\mid s})
g\left(u,\frac{1-s}{2}\right)
}{
g\left(
g(X_t,\gamma_{t\mid s}),
1-g\left(u,\frac{1-s}{2}\right)
\right)
}.
\label{eq:app_posterior_function}
\end{equation}

The oracle and learned posterior parameters are
therefore
\begin{equation}
q(X_s=1\mid X_t,X_{t_0})
 =r(X_{t_0}),
\qquad
p_\theta(X_s=1\mid X_t)
 =r(f_\theta(X_t,t)).
\end{equation}

Using the definition of $g$ in Eq.~(2),
we have
\begin{equation}
\frac{d}{du}
g\left(u,\frac{1-s}{2}\right)=s.
\end{equation}

Differentiating Eq.~\eqref{eq:app_posterior_function}
with respect to $u$ gives
\begin{equation}
r'(u)=
\frac{
s\,g(X_t,\gamma_{t\mid s})
\left[1-g(X_t,\gamma_{t\mid s})\right]
}{
\left[
g\left(
g(X_t,\gamma_{t\mid s}),
1-g\left(u,\frac{1-s}{2}\right)
\right)
\right]^2
}.
\label{eq:app_posterior_derivative}
\end{equation}

Since $X_t\in\{0,1\}$,
\begin{equation}
g(X_t,\gamma_{t\mid s})
\in
\left\{
\frac{s-t}{2s},
\frac{s+t}{2s}
\right\}.
\end{equation}

Consequently,
\begin{equation}
g(X_t,\gamma_{t\mid s})
\left[1-g(X_t,\gamma_{t\mid s})\right]
=\frac{s^2-t^2}{4s^2}.
\end{equation}

Moreover, the denominator before squaring in
Eq.~\eqref{eq:app_posterior_derivative}
can be expressed as
\begin{equation}
\begin{aligned}
g\left(
g(X_t,\gamma_{t\mid s}),
1-g\left(u,\frac{1-s}{2}\right)
\right)
&=
\begin{cases}
\frac{1-t}{2}+tu, & X_t=1,\\[2pt]
\frac{1+t}{2}-tu, & X_t=0.
\end{cases}
\end{aligned}
\end{equation}

For all $u\in[0,1]$, this quantity is bounded
below by $(1-t)/2$. Therefore,
\begin{equation}
|r'(u)|
\leq
\frac{s^2-t^2}{s(1-t)^2}.
\label{eq:app_lipschitz_bound}
\end{equation}

Applying the mean-value theorem yields
\begin{equation}
\left|
r(f_\theta(X_t,t))-r(X_{t_0})
\right|
\leq
\frac{s^2-t^2}{s(1-t)^2}
\left|f_\theta(X_t,t)-X_{t_0}\right|.
\end{equation}

For two Bernoulli distributions, the TV distance
equals the absolute difference between their
Bernoulli parameters. Hence,
\begin{equation}
D_{\mathrm{TV}}\big(
q(X_s\mid X_t,X_{t_0}),
p_\theta(X_s\mid X_t)
\big)
\leq
\frac{s^2-t^2}{s(1-t)^2}
\left|f_\theta(X_t,t)-X_{t_0}\right|.
\end{equation}

This proves Proposition~\ref{prop:learned_error}
for both adjacent-step and arbitrary cross-step
reverse transitions.

\subsubsection{KL Bound and Connection to BCE}

We next connect the prediction-induced
reverse-transition error to the BCE objective.

For $0\leq t<s<1$, the posterior parameter
in Eq.~\eqref{eq:app_posterior_function}
satisfies
\begin{equation}
r(u)\in[\delta_{s,t},1-\delta_{s,t}],
\qquad u\in[0,1],
\end{equation}
where
\begin{equation}
\delta_{s,t}
=
\frac{(s-t)(1-s)}{2s(1+t)}>0.
\label{eq:app_delta}
\end{equation}

To see this, both
$g(X_t,\gamma_{t\mid s})$ and
$1-g(X_t,\gamma_{t\mid s})$
are at least $(s-t)/(2s)$.
Similarly, both
$g(u,(1-s)/2)$ and
$1-g(u,(1-s)/2)$
are at least $(1-s)/2$,
while the common denominator in
Eq.~\eqref{eq:app_posterior_function}
is at most $(1+t)/2$.

For Bernoulli distributions with parameters
$a,b\in(0,1)$, the KL divergence satisfies
\begin{equation}
D_{\mathrm{KL}}
\big(\mathcal{B}(a)\|
\mathcal{B}(b)\big)
\leq
\frac{(a-b)^2}{b(1-b)}.
\end{equation}

Combining this inequality with
Eq.~\eqref{eq:app_lipschitz_bound} gives
\begin{equation}
D_{\mathrm{KL}}\big(
q(X_s\mid X_t,X_{t_0})
\|p_\theta(X_s\mid X_t)
\big)
\leq
C_{s,t}
\left|f_\theta(X_t,t)-X_{t_0}\right|^2,
\end{equation}
where
\begin{equation}
C_{s,t}
=
\frac{1}{\delta_{s,t}(1-\delta_{s,t})}
\left(
\frac{s^2-t^2}{s(1-t)^2}
\right)^2.
\label{eq:app_kl_constant}
\end{equation}

For any binary $x\in\{0,1\}$ and
$r\in(0,1)$,
\begin{equation}
|r-x|^2
\leq
-x\log r-(1-x)\log(1-r).
\label{eq:app_bce_inequality}
\end{equation}

Therefore, the coordinate-wise KL divergence
is bounded by
\begin{equation}
\begin{aligned}
D_{\mathrm{KL}}\big(
q(X_s\mid X_t,X_{t_0})
\|p_\theta(X_s\mid X_t)
\big)
&\leq C_{s,t}
\Big[
-X_{t_0}\log f_\theta(X_t,t)\\
&\quad-
(1-X_{t_0})
\log(1-f_\theta(X_t,t))
\Big].
\end{aligned}
\label{eq:app_bce_kl_bound}
\end{equation}

For the $d$-dimensional extension, assume that the
forward corruption acts independently across coordinates
conditional on the preceding state and that the learned
reverse transition is product Bernoulli conditional on
its current state. Under this assumption, the oracle
posterior conditioned on $X_{t_0}$ and the learned
transition both factorize coordinate-wise.
The full-state KL divergence is therefore
bounded by the sum of the corresponding
coordinate-wise BCE terms.

The true reverse transition is obtained by
marginalizing the oracle posterior over
the clean state:
\begin{equation}
q(X_s\mid X_t)
=
\mathbb{E}_{q(X_{t_0}\mid X_t)}
\left[
q(X_s\mid X_t,X_{t_0})
\right].
\end{equation}

By convexity of KL divergence,
\begin{equation}
D_{\mathrm{KL}}\big(
q(X_s\mid X_t)
\|p_\theta(X_s\mid X_t)
\big)
\leq
\mathbb{E}_{q(X_{t_0}\mid X_t)}
\Big[
D_{\mathrm{KL}}\big(
q(X_s\mid X_t,X_{t_0})
\|p_\theta(X_s\mid X_t)
\big)
\Big].
\end{equation}

Combining this with
Eq.~\eqref{eq:app_bce_kl_bound}
shows that, for each fixed non-terminal interval
$(s,t)$, the discrepancy from the true reverse
transition is controlled by the expected BCE terms
with the interval-dependent factor $C_{s,t}$.
For any fixed finite collection of such intervals,
the maximum of $C_{s,t}$ over that collection is
finite, so summing over coordinates and averaging
over the corresponding training distribution gives
an upper bound proportional to the BCE objective in
Eq.~\eqref{eq:bce_loss}, restricted to those intervals.
This is an interval-wise
control statement; no grid-uniform constant is
claimed as $s\uparrow1$.

\subsubsection{Accumulated Low-NFE Sampling Error}

Consider an arbitrary inference grid
\begin{equation}
1=t_0>t_1>\cdots>t_K=0,
\end{equation}
where $K$ is the number of reverse transitions.

For each intermediate step $j\geq2$, assume
the expected soft prediction error satisfies
\begin{equation}
\mathbb{E}_{q(X_{t_0},X_{t_j})}
\Big[
\left\|f_\theta(X_{t_j},t_j)-X_{t_0}\right\|_1
\Big]
\leq\varepsilon_j.
\label{eq:app_expected_error}
\end{equation}

Under the same coordinate-wise product structure, the
full-state TV distance is bounded by the
sum of the coordinate-wise TV distances.
Combining this property with
Proposition~\ref{prop:learned_error}
and the convexity of TV gives
\begin{equation}
\mathbb{E}_{q(X_{t_j})}
\Big[
D_{\mathrm{TV}}\big(
q(X_{t_{j-1}}\mid X_{t_j}),
p_\theta(X_{t_{j-1}}\mid X_{t_j})
\big)
\Big]
\leq
\frac{t_{j-1}^2-t_j^2}
{t_{j-1}(1-t_j)^2}
\varepsilon_j.
\label{eq:app_step_tv_error}
\end{equation}

The final transition to $t_0=1$ is treated
separately.\par
\noindent We define its expected
reverse-transition discrepancy as
\begin{equation}
E_{\mathrm{rec}}
:=
\mathbb{E}_{q(X_{t_1})}
\Big[
D_{\mathrm{TV}}\big(
q(X_{t_0}\mid X_{t_1}),
p_\theta(X_{t_0}\mid X_{t_1})
\big)
\Big].
\label{eq:app_reconstruction_error}
\end{equation}

By the contraction property of Markov kernels
and the triangle inequality, each reverse
step satisfies
\begin{equation}
\begin{aligned}
D_{\mathrm{TV}}\big(
q(X_{t_{j-1}}),
p_\theta(X_{t_{j-1}})
\big)
&\leq
D_{\mathrm{TV}}\big(
q(X_{t_j}),p_\theta(X_{t_j})
\big)\\
&\mathrel{+}
\mathbb{E}_{q(X_{t_j})}
\Big[
D_{\mathrm{TV}}\big(
q(X_{t_{j-1}}\mid X_{t_j}),
p_\theta(X_{t_{j-1}}\mid X_{t_j})
\big)
\Big].
\end{aligned}
\label{eq:app_tv_telescope}
\end{equation}

Since the true and learned reverse processes
start from the same Bernoulli prior,
\begin{equation}
q(X_{t_K})=p_\theta(X_{t_K})=\pi_1,
\end{equation}
the initial TV discrepancy is zero.

Applying Eq.~\eqref{eq:app_tv_telescope}
recursively and using
Eq.~\eqref{eq:app_step_tv_error}
and Eq.~\eqref{eq:app_reconstruction_error}
yields
\begin{equation}
 D_{\mathrm{TV}}\big(
\pi_0,p_\theta(X_{t_0})
\big)
\leq
\min\left\{
1,\,
E_{\mathrm{rec}}+
\sum_{j=2}^{K}
\frac{t_{j-1}^{2}-t_j^{2}}
{t_{j-1}(1-t_j)^2}
\varepsilon_j
\right\}.
\label{eq:app_accumulated_tv}
\end{equation}

This proves Proposition~\ref{prop:accumulated_error}.

The result shows that the oracle cross-step
posterior remains analytically exact, while
the learned sampler accumulates errors
determined by prediction accuracy and
the selected inference intervals.

\subsection{Connecting the Analytical Posterior to the Practical Sampler}
\label{app:oracle-to-learned}

This subsection clarifies how the clean-data-conditioned posterior in
Theorem~\ref{theorem:cross_step} is related to the practical reverse
transition used during generation. Specifically, we explain three points:
why the posterior conditioned on $X_{t_0}$ is available as a
ground-truth denoising target during training, why likelihood-based
training requires the learned reverse transition to approximate this
posterior, and why the resulting reverse transition can be parameterized
by predicting $X_{t_0}$ rather than directly predicting the complete
posterior probability.

\paragraph{Forward and reverse Bernoulli processes.}
Let
\begin{equation}
1=t_0>t_1>\cdots>t_T=0
\end{equation}
denote a finite time grid. Here, $X_{t_0}$ is a clean binary observation
drawn from the data distribution $\pi_0$, whereas $X_{t_T}$ follows the
product-Bernoulli prior
$\pi_1=\mathcal{B}(\tfrac{1}{2}\mathbf{1}_d)
=\bigotimes_{i=1}^d\mathcal{B}(\tfrac{1}{2})$.

The predefined forward process progressively transforms the clean data
into Bernoulli noise:
\begin{equation}
q(X_{t_{1:T}}\mid X_{t_0})
=
\prod_{\tau=1}^{T}
q(X_{t_\tau}\mid X_{t_{\tau-1}}).
\label{eq:forward-joint}
\end{equation}
All transitions in this process are analytically specified and contain
no learnable parameters.

The generative model operates in the opposite direction:
\begin{equation}
p_\theta(X_{t_{0:T}})
=
p(X_{t_T})
\prod_{\tau=1}^{T}
p_\theta(X_{t_{\tau-1}}\mid X_{t_\tau}).
\label{eq:reverse-joint}
\end{equation}
Generation begins from $X_{t_T}\sim\pi_1$ and repeatedly
samples from the learned reverse transitions until a clean sample
$X_{t_0}$ is obtained. The learning problem is therefore to construct
$p_\theta(X_{t_{\tau-1}}\mid X_{t_\tau})$ using only the current noisy
state $X_{t_\tau}$.

\paragraph{Why the oracle posterior appears during training.}
Ideally, the reverse sampler should reproduce the true reverse transition
\begin{equation}
q(X_{t_{\tau-1}}\mid X_{t_\tau}).
\end{equation}
This distribution is generally intractable because evaluating it requires
marginalizing over the unknown data distribution. During training,
however, the clean sample $X_{t_0}$ is observed from the dataset, and
$X_{t_\tau}$ is generated by applying the predefined forward process to
this known sample. We can therefore evaluate the tractable conditional
posterior
\begin{equation}
q(X_{t_{\tau-1}}\mid X_{t_\tau},X_{t_0}).
\end{equation}

We refer to this distribution as the oracle posterior because it has
access to the clean state toward which $X_{t_\tau}$ should be denoised.
The conditioning variable $X_{t_0}$ is available only when constructing
the training target; it is not assumed to be available during
generation.

The role of this oracle posterior follows directly from
maximum-likelihood training. The objective of the generative model is to
maximize the likelihood assigned to the observed clean data,
$\log p_\theta(X_{t_0})$. Marginalizing over the intermediate binary
states gives
\begin{align}
\log p_\theta(X_{t_0})
&=
\log
\sum_{X_{t_1},\ldots,X_{t_T}}
p_\theta(X_{t_0},X_{t_1},\ldots,X_{t_T})
\nonumber\\
&=
\log
\sum_{X_{t_1},\ldots,X_{t_T}}
p_\theta(X_{t_{0:T}})
\frac{
q(X_{t_{1:T}}\mid X_{t_0})
}{
q(X_{t_{1:T}}\mid X_{t_0})
}
\nonumber\\
&=
\log
\sum_{X_{t_1},\ldots,X_{t_T}}
q(X_{t_{1:T}}\mid X_{t_0})
\frac{
p_\theta(X_{t_{0:T}})
}{
q(X_{t_{1:T}}\mid X_{t_0})
}
\nonumber\\
&=
\log
\mathbb{E}_{q(X_{t_{1:T}}\mid X_{t_0})}
\left[
\frac{
p_\theta(X_{t_{0:T}})
}{
q(X_{t_{1:T}}\mid X_{t_0})
}
\right].
\label{eq:data-likelihood-expectation}
\end{align}
Applying Jensen's inequality produces a tractable lower bound:
\begin{equation}
\log p_\theta(X_{t_0})
\geq
\mathbb{E}_{q}
\left[
\log
\frac{
p_\theta(X_{t_{0:T}})
}{
q(X_{t_{1:T}}\mid X_{t_0})
}
\right].
\end{equation}

Substituting the forward factorization in
Eq.~\eqref{eq:forward-joint} and the reverse factorization in
Eq.~\eqref{eq:reverse-joint}, and rearranging the forward transitions
using Bayes' rule, decomposes this lower bound as
\begin{align}
\log p_\theta(X_{t_0})
\geq{}&
\mathbb{E}_{q}
\left[
\log p_\theta(X_{t_0}\mid X_{t_1})
\right]
\nonumber\\
&-
D_{\mathrm{KL}}
\left(
q(X_{t_T}\mid X_{t_0})
\Vert
p(X_{t_T})
\right)
\nonumber\\
&-
\sum_{\tau=2}^{T}
\mathbb{E}_{q(X_{t_\tau}\mid X_{t_0})}
\left[
D_{\mathrm{KL}}
\left(
q(X_{t_{\tau-1}}\mid X_{t_\tau},X_{t_0})
\Vert
p_\theta(X_{t_{\tau-1}}\mid X_{t_\tau})
\right)
\right].
\label{eq:bernoulli-elbo}
\end{align}

Equation~\eqref{eq:bernoulli-elbo} makes the learning objective explicit.
The first term encourages reconstruction of the clean sample from the
least corrupted state. The second term matches the endpoint of the
forward process to the prescribed Bernoulli prior and contains no
learnable reverse-transition parameters. Most importantly, maximizing
the lower bound requires minimizing the denoising-matching terms
\begin{equation}
D_{\mathrm{KL}}
\left(
q(X_{t_{\tau-1}}\mid X_{t_\tau},X_{t_0})
\Vert
p_\theta(X_{t_{\tau-1}}\mid X_{t_\tau})
\right).
\label{eq:posterior-matching-objective}
\end{equation}

Thus, the oracle posterior is not an additional assumption made at
inference time. It is the tractable ground-truth reverse transition that
the practical sampler is trained to approximate. By making these
per-step reverse transitions accurate, the learned process can
iteratively transform samples from the Bernoulli prior back toward the
data distribution.
\paragraph{Why predicting $X_{t_0}$ is sufficient.}
Equation~\eqref{eq:posterior-matching-objective} specifies the learning
target: the practical reverse transition
$p_\theta(X_{t_{\tau-k}}\mid X_{t_\tau})$ should approximate the
ground-truth posterior
$q(X_{t_{\tau-k}}\mid X_{t_\tau},X_{t_0})$. It does not, however,
specify how $p_\theta$ should be parameterized.

One possible way to parameterize the practical reverse transition is to
use a neural network to directly predict the complete Bernoulli
probability of $p_\theta(X_{t_{\tau-k}}\mid X_{t_\tau})$ and train it to match the
Bernoulli parameter of the ground-truth posterior
$q(X_{t_{\tau-k}}\mid X_{t_\tau},X_{t_0})$. In other words, the network
could directly regress the full oracle posterior probability for each
noisy state and time interval. However, as noted in
BLD~\cite{wang2023binary}, directly modeling such reverse-transition
probabilities can be challenging, because the model must accurately
regress sophisticated interpolations that vary jointly with the current
noisy state and the noise level.

In BFM, Theorem~\ref{theorem:cross_step} provides the analytical form
of this posterior. For an arbitrary reverse interval from $t_\tau$ to
$t_{\tau-k}$, the oracle posterior is
\begin{equation}
q(X_{t_{\tau-k}}\mid X_{t_\tau},X_{t_0})
=
\mathcal{B}
\left(
X_{t_{\tau-k}};
\frac{
g(X_{t_\tau},\gamma_{t_\tau\mid t_{\tau-k}})
\alpha_{t_{\tau-k}}
}{
g\left(
g(X_{t_\tau},\gamma_{t_\tau\mid t_{\tau-k}}),
1-\alpha_{t_{\tau-k}}
\right)
}
\right),
\label{eq:oracle-cross-step-relation}
\end{equation}
where
\begin{equation}
\alpha_{t_{\tau-k}}
=
g\left(
X_{t_0},
\frac{1-t_{\tau-k}}{2}
\right),
\qquad
\gamma_{t_\tau\mid t_{\tau-k}}
=
\frac{0.5(t_{\tau-k}-t_\tau)}{t_{\tau-k}}.
\label{eq:oracle-known-quantities}
\end{equation}

\begingroup
\raggedbottom
For a given reverse update, $X_{t_\tau}$ and the inference times are
known, and $\gamma_{t_\tau\mid t_{\tau-k}}$ is analytically determined.
Thus, the only unknown quantity in
Eq.~\eqref{eq:oracle-cross-step-relation} is $X_{t_0}$.
We use the neural network $f_\theta(X_{t_\tau},t_\tau)$ to predict
$X_{t_0}$. Hence, the posterior $p_\theta$ parameterized by $f_\theta$ can be
written as
\begin{align}
&p_\theta(X_{t_{\tau-k}}\mid X_{t_\tau})
\nonumber\\
&=
\mathcal{B}
\left(
X_{t_{\tau-k}};
\frac{
g(X_{t_\tau},\gamma_{t_\tau\mid t_{\tau-k}})
g\left(
f_\theta(X_{t_\tau},t_\tau),
\frac{1-t_{\tau-k}}{2}
\right)
}{
g\left(
g(X_{t_\tau},\gamma_{t_\tau\mid t_{\tau-k}}),
1-
g\left(
f_\theta(X_{t_\tau},t_\tau),
\frac{1-t_{\tau-k}}{2}
\right)
\right)
}
\right).
\label{eq:learned-cross-step-relation}
\end{align}

Therefore, $X_{t_0}$-prediction provides a structured parameterization
of posterior matching. In the ideal case where
$f_\theta(X_{t_\tau},t_\tau)=X_{t_0}$,
Eq.~\eqref{eq:learned-cross-step-relation} exactly recovers the oracle
posterior in Eq.~\eqref{eq:oracle-cross-step-relation}. This
reparameterization through $X_{t_0}$-prediction corresponds to the
standard $x_0$-prediction strategy widely used in DDPMs and SDE diffusion models
~\cite{ho2020denoising,song2020score}.

\section{Visualization of Different Temperatures of BFM Generated Samples}
\label{appendix_temperature_visualization}

\begin{figure}[H]
    \centering
    \includegraphics[width=0.8\textwidth]{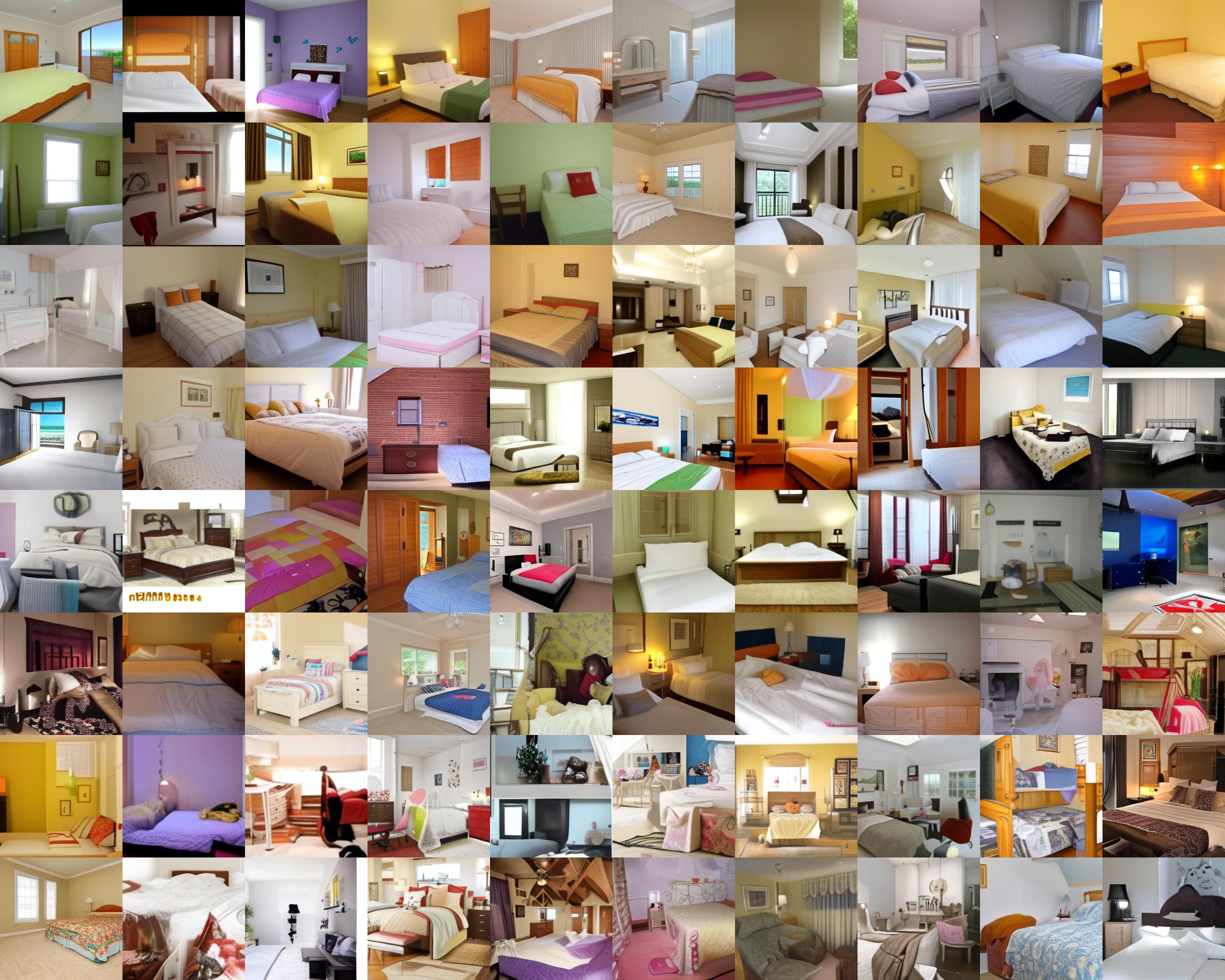}
    \caption{Unconditional $256 \times 256$ samples on LSUN Bedrooms. Sampling temperatures $\kappa$  are linearly interpolated from 0.5 to 1.0 (left to right).}
    \label{fig:big_bedroom}
\end{figure}
\clearpage
\endgroup


\begin{figure}[H]
    \centering
    \includegraphics[width=0.8\textwidth]{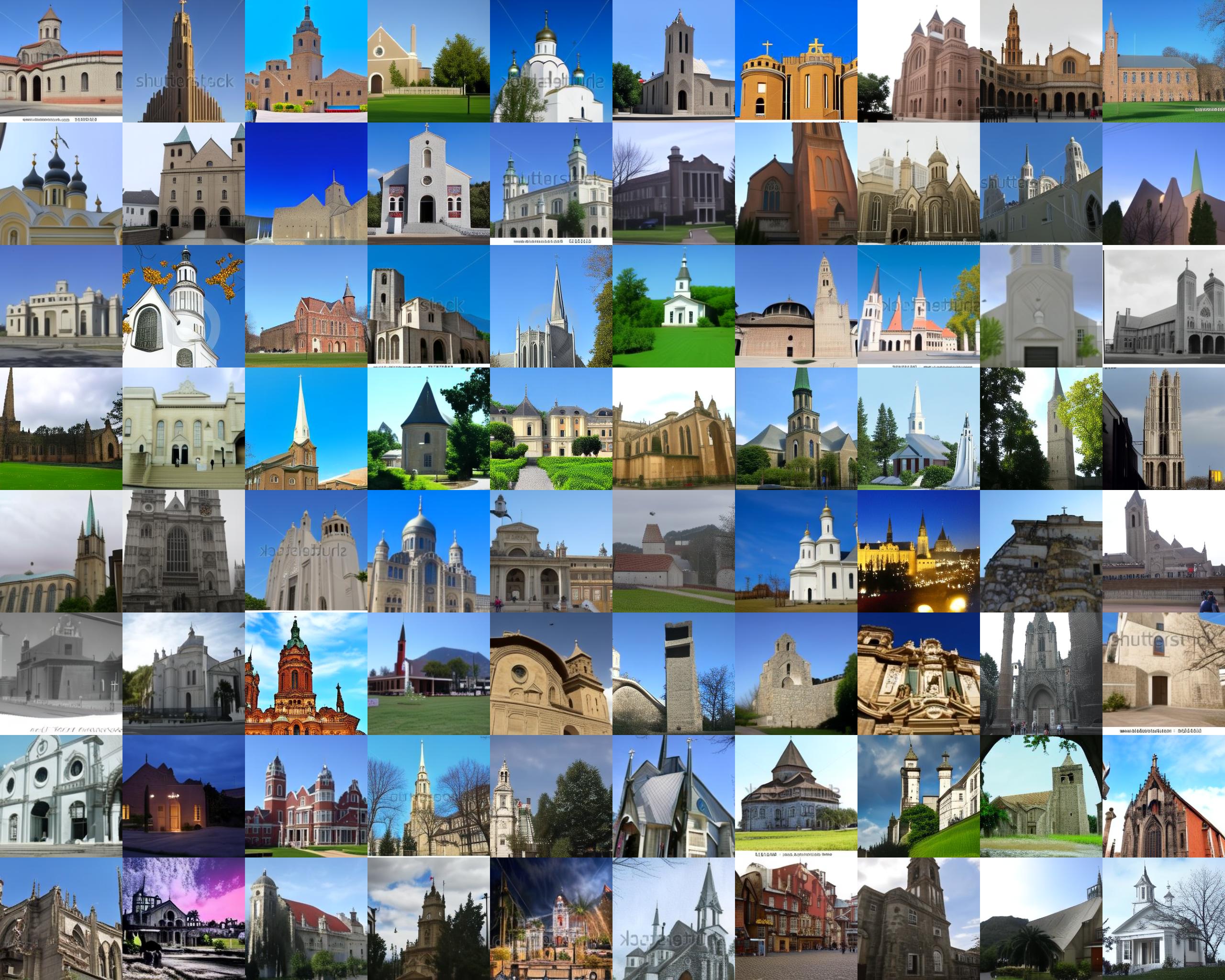}
    \caption{Unconditional $256 \times 256$ samples on LSUN Churches. Sampling temperatures $\kappa$ are linearly interpolated from 0.5 to 1.0 (left to right).}
    \label{fig:big_church}
\end{figure}

\begin{figure}[H]
    \centering
    \includegraphics[width=0.8\textwidth]{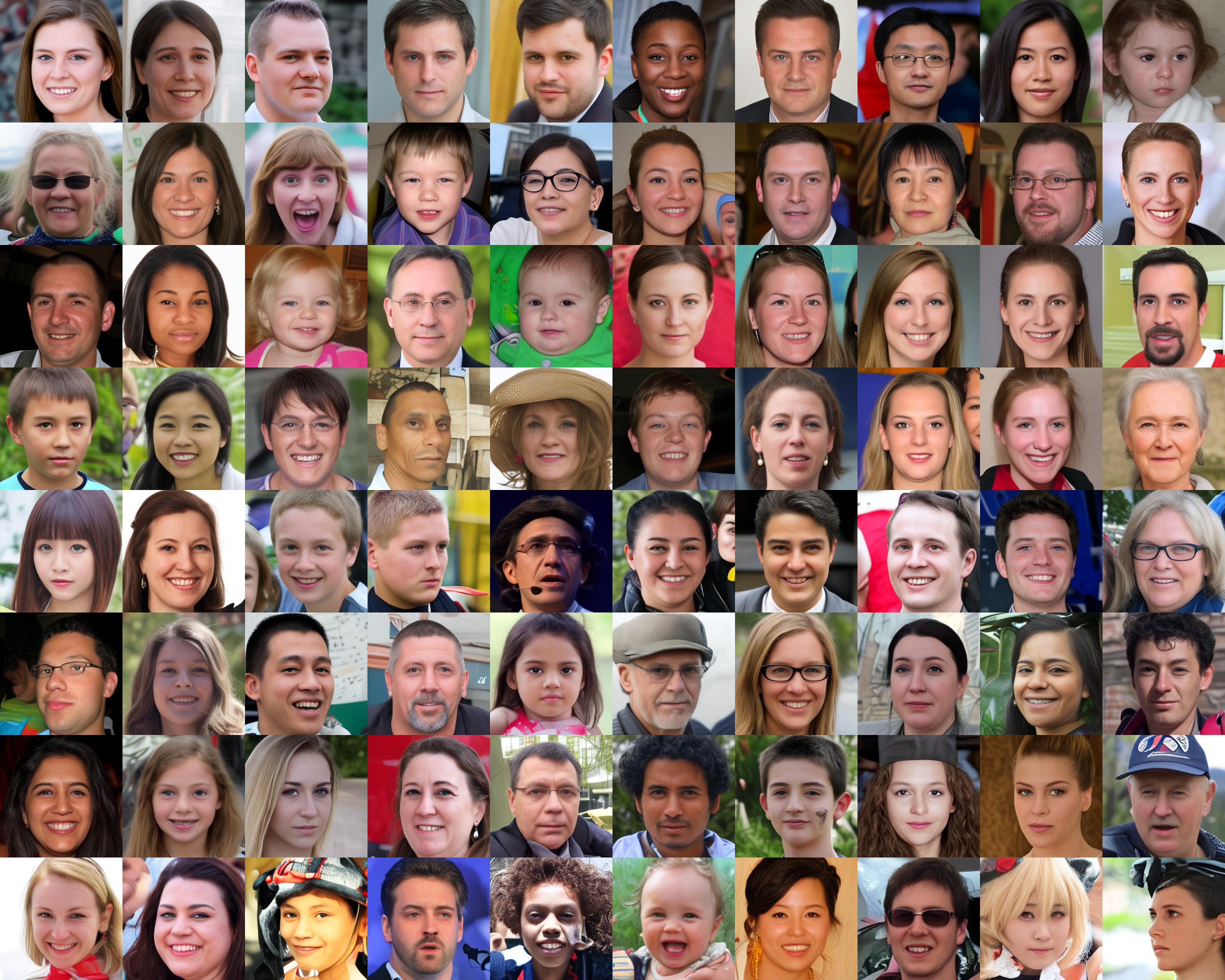}
    \caption{Unconditional $256 \times 256$ samples on FFHQ. Sampling temperatures $\kappa$ are linearly interpolated from 0.5 to 1.0 (left to right).}
    \label{fig:big_ffhq}
\end{figure}
\section{Beyond Image Synthesis: Ising Model Generation}
\label{appendix_ising}

Finally, we evaluate BFM on the Ising model, a canonical probabilistic model in statistical physics defined over interacting spin variables $\mathbf{s} \in \{+1,-1\}^{D}$ \citep{ising1925beitrag}. This experiment demonstrates that BFM is not restricted to image generation, but also applies to general binary scientific data.

\begin{figure*}[h!]
    \includegraphics[width=\textwidth]{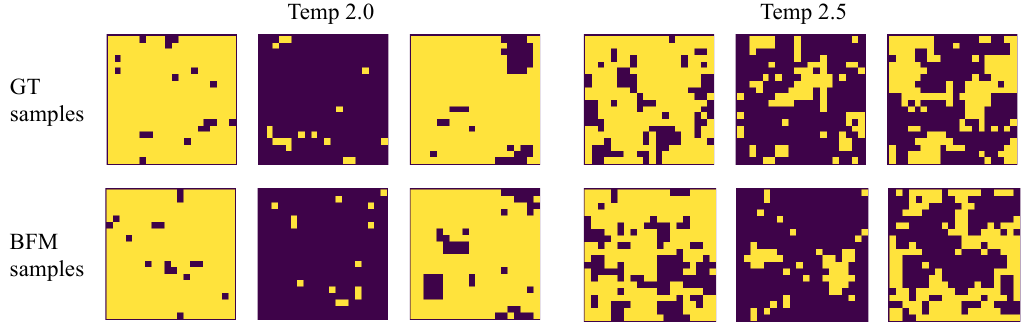}
    \caption{Comparison of $20\times20$ Ising lattice samples generated by BFM and ground-truth samples at temperatures 2.0 and 2.5.}
    \label{fig:ising_model_20x20}
\end{figure*}

To obtain reference Ising samples at different temperatures, we use the Wolff cluster algorithm \cite{wolff1989collective}. We consider a $20\times20$ 2D lattice Ising model with periodic boundary conditions, where each spin interacts with its four nearest neighbors. The Markov chain is run for 1000 steps to reach thermal equilibrium. We collect 500 training samples and 100 validation samples at each temperature from 1.50 to 3.10. We then train a conditional BFM for 100K iterations with batch size 512. During inference, we use 64 sampling steps to generate Ising samples from pure noise.

Fig.~\ref{fig:ising_model_20x20} shows representative comparisons between BFM-generated samples and Wolff-cluster reference samples at temperatures 2.0 and 2.5. BFM generates high-quality Ising configurations that are visually consistent with the reference samples. 

\begin{figure*}[htbp]
    \centering
    \begin{subfigure}[b]{0.28\textwidth}
        \includegraphics[width=\textwidth]{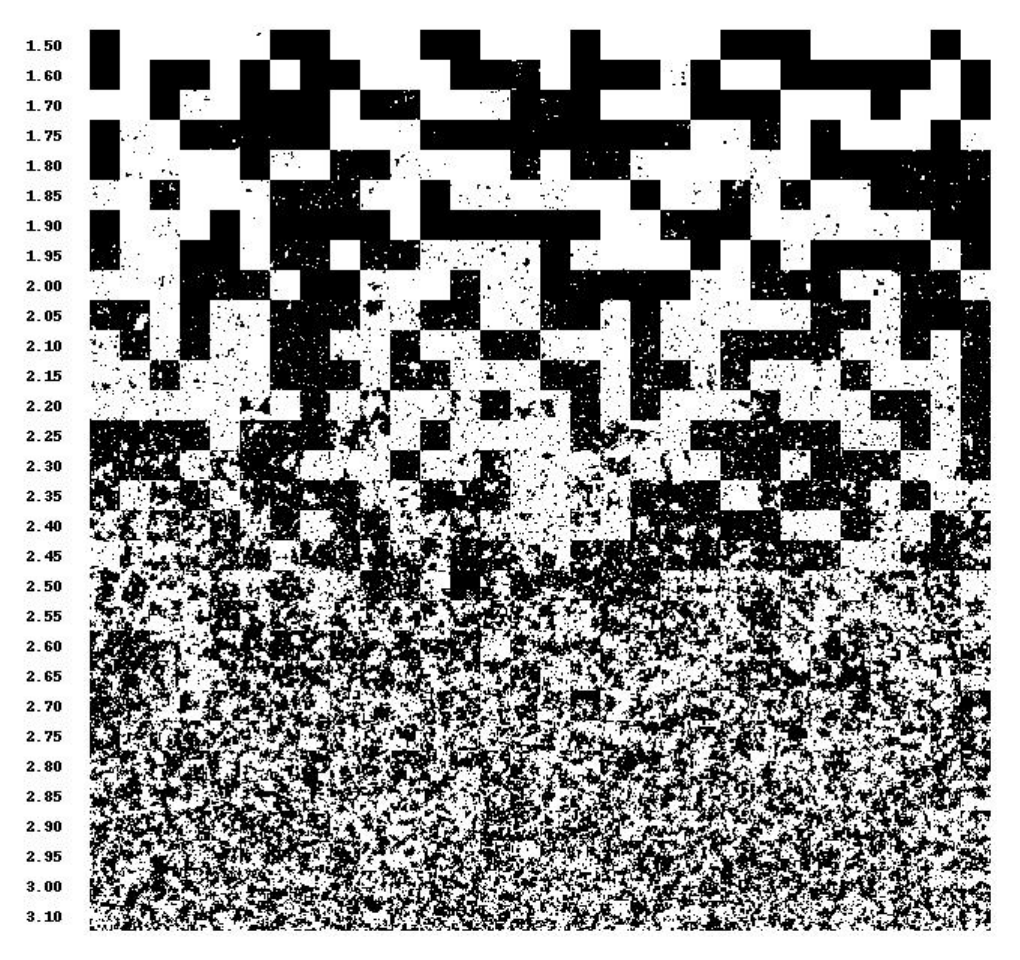}
        \caption{BFM Ising Samples}
        \label{fig:ising_sample_full}
    \end{subfigure} 
    \hfill
    \begin{subfigure}[b]{0.28\textwidth}
        \includegraphics[width=\textwidth]{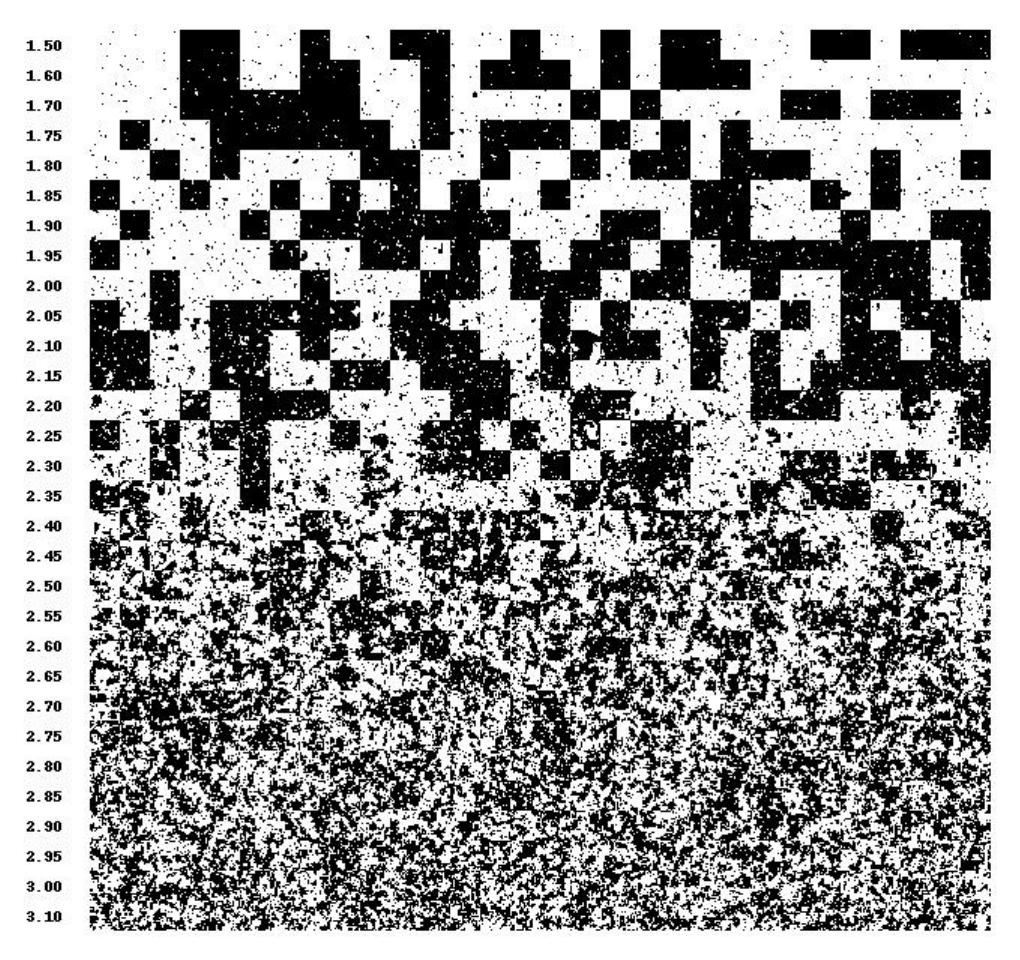}
        \caption{Wolff Cluster Ising Samples}
        \label{fig:ising_gt_full}
    \end{subfigure}
    \hfill
    \begin{subfigure}[b]{0.32\textwidth}
        \includegraphics[width=\textwidth]{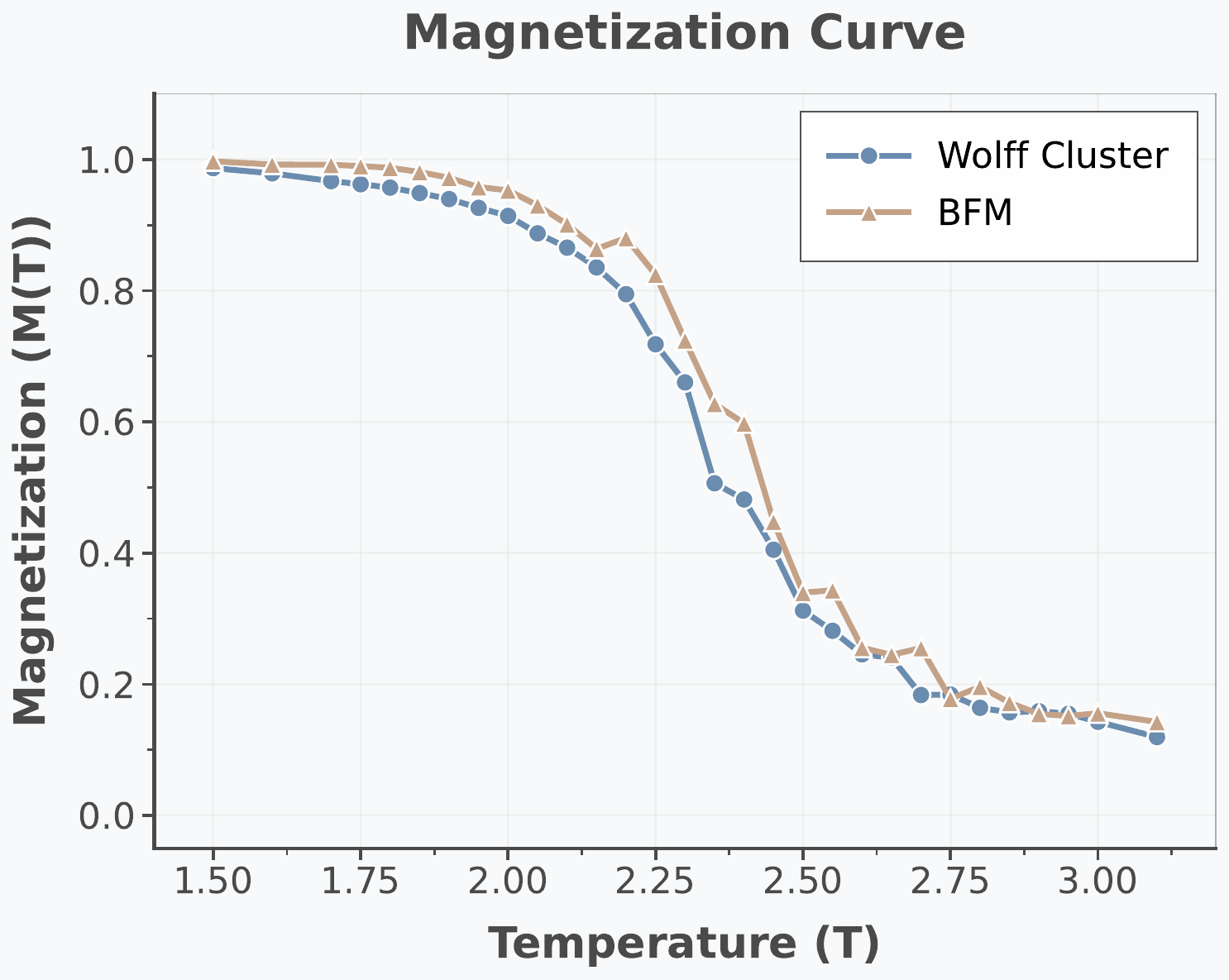}
        \caption{Magnetization Curve}
        \label{fig:ising_magnetization}
    \end{subfigure}
    \caption{Comprehensive evaluation of Ising model generation across the continuous temperature spectrum (1.50 to 3.10). (a) and (b) show the visual evolution of spin lattices generated by BFM and the ground-truth Wolff cluster~\cite{wolff1989collective} algorithm, respectively. (c) Quantitative comparison of the absolute magnetization $M(T)$, demonstrating that BFM accurately captures the thermodynamic properties.}
    \label{fig:ising_comprehensive}
\end{figure*}

To further investigate our model's ability to simulate the macroscopic thermodynamic behavior of the Ising system, we extend our evaluation across the entire temperature spectrum in Fig.~\ref{fig:ising_comprehensive}. In statistical physics, visual inspection is often insufficient; a generative model must faithfully capture the underlying Boltzmann distribution. To rigorously substantiate our claim, we calculate the absolute magnetization $M(T)$ of the generated samples as a quantitative physical observable. 

Visually, the BFM-generated samples (Fig.~\ref{fig:ising_comprehensive}a) smoothly capture the structural evolution of the spin lattices from highly ordered to completely disordered states, strictly mirroring the Wolff cluster ground truth (Fig.~\ref{fig:ising_comprehensive}b). More importantly, the quantitative magnetization curve of BFM (Fig.~\ref{fig:ising_comprehensive}c) tightly tracks the ground truth simulations. This quantitative alignment proves that BFM learns the complex physical interaction rules of the system rather than merely memorizing visual training patterns.

\section{Discussion on Cross-step Sampling Consistency of BFM}
\label{sec:cross_step_consistency}
In this section, we discuss the reason why BFM exhibits better sampling quality than BLD when using different NFE from the same trained checkpoint. This property is particularly important in practical applications, as it allows for flexible adjustment of sampling speed and quality trade-offs without the need for retraining the model.
We follow the notation used in BLD to explain this issue. In the sampling scenario of Fig.~\ref{fig:low_nfe_visual_comparison}, when we sample with $K$ NFE from a model trained with $N$ steps, we need to perform cross-step sampling with stride $m = N / K$. According to Eq.~(9) in BLD, the posterior transition can be written as
\begin{equation}
q ( \mathbf { z } ^ { t - m } | \mathbf { z } ^ { t } , \mathbf { z } ^ { 0 } ) = \frac { q ( \mathbf { z } ^ { t } | \mathbf { z } ^ { t - m } , \mathbf { z } ^ { 0 } ) q ( \mathbf { z } ^ { t - m } | \mathbf { z } ^ { 0 } ) } { q ( \mathbf { z } ^ { t } | \mathbf { z } ^ { 0 } ) } .
\end{equation}
In the official BLD code implementation, the evidence transition term is implemented as
\begin{equation}
 q ( \mathbf { z } ^ { t } | \mathbf { z } ^ { t - m } , \mathbf { z } ^ { 0 } ) = \mathcal{B}(\mathbf { z } ^ { t };\mathbf { z }^{t-m}(1-\beta^{t-m})+0.5\beta^{t-m}),
\end{equation}
which effectively uses only the last-step noise parameter $\beta^{t-m}$ for this cross-step update; however, the corresponding exact evidence transition should be
\begin{equation}
q(\mathbf { z }^t \mid \mathbf { z }^{t-m}) = \mathcal{B}\left(\mathbf { z }^t; \left(\prod_{j=t-m+1}^t (1-\beta^{j})\right) \mathbf { z }^{t-m} + 0.5\left(1 - \prod_{j=t-m+1}^t (1-\beta^{j})\right)\right)
\label{eq:bld_corrected_cross_step}
\end{equation}

In contrast, in the framework of BFM, the cross-step posterior sampling is naturally supported by our derived posterior transition in \eqref{eq:one_step_transition} and \eqref{eq:posterior}, which can be directly applied to any arbitrary cross-step posterior transition with slight modification. The only modification is to replace the original $N$-step decreasing schedule of $t_{\tau}\in[0,1]$ from $1$ to $0$ with a new schedule containing $K$ NFEs at sampling time. We refer to this property as the \textbf{self-consistency of BFM under cross-step sampling}.

Here we explain why BFM has this self-consistency property. The evidence transition $q(X_{t_\tau} | X_{t_{\tau-1}}, X_{t_0})$ from $X_{t_{\tau-1}}$ to $X_{t_{\tau}}$ depends on $\gamma_{t_\tau}$, we use the marginal distribution from $X_{t_0}$ to $X_{t_\tau}$ and $X_{t_{\tau-1}}$ to derive $\gamma_{t_\tau}$ and find it only depends on $t_{\tau}$ and $t_{\tau-1}$ as \eqref{eq:t_gamma_t_tau} shows. Therefore, when we perform cross-step sampling with different NFE, we can simply re-calculate $\gamma_{t_\tau}$ based on the new $t_{\tau}$ and $t_{\tau-1}$ without any approximation, which ensures the correctness of the posterior transition. This is the key mechanism of BFM's self-consistency property in cross-step sampling.
The self-consistency property makes BFM more flexible and robust in cross-step sampling scenarios.

\paragraph{Evaluation with Corrected BLD.}
We further evaluate BLD~\cite{wang2023binary}
using the corrected cross-step transition
in Eq.~\eqref{eq:bld_corrected_cross_step}.
Both BFM and corrected BLD are evaluated on
LSUN Churches $256\times256$ using the same
pretrained binary autoencoder, evaluation protocol,
and 50,000 generated samples at each NFE.

\begin{table}[htbp]
\centering
\caption{Low-NFE comparison between BFM and corrected
BLD on LSUN Churches $256\times256$.
Relative reduction denotes the FID improvement
of BFM over corrected BLD.}
\label{tab:corrected_bld}
\begin{tabular}{cccc}
\toprule
NFE & BFM FID $\downarrow$ &
Corrected BLD FID $\downarrow$ &
Reduction $\uparrow$ \\
\midrule
16  & \textbf{10.41} & 10.49 & 0.76\% \\
32  & \textbf{8.28}  & 8.37  & 1.08\% \\
64  & \textbf{7.78}  & 7.98  & 2.51\% \\
128 & \textbf{7.70}  & 7.85  & 1.91\% \\
256 & \textbf{7.59}  & 7.79  & 2.57\% \\
\bottomrule
\end{tabular}
\end{table}

Correcting the BLD cross-step transition substantially
improves its low-NFE performance compared with the
original implementation. Nevertheless, BFM consistently
achieves lower FID across all evaluated NFEs, with
relative reductions ranging from 0.76\% to 2.57\%.

These results indicate that the severe low-NFE
degradation observed in the original BLD implementation
is largely attributable to its cross-step transition
approximation, rather than an inherent limitation of
discrete Markov diffusion. BFM remains competitive
with the corrected baseline while retaining a direct
analytical construction for arbitrary time intervals.

\section{Additional Implementation Details}
\label{appendix_implementation_details}

In this section, we provide detailed network configurations and training hyperparameters for all tasks, as summarized in Table~\ref{tab:hyperparameters}. For high-dimensional image synthesis (LSUN and FFHQ), the training process operates within a discrete latent space. Regarding these latent representations, we utilize the official pre-trained Binary Autoencoder (BAE) provided by BLD~\cite{wang2023binary}. Rather than training a dedicated autoencoder for each specific dataset, we adopt their general-purpose \textbf{C64} model, which was trained on over 600 million images from the LAION dataset. This robust autoencoder explicitly compresses a high-resolution $256 \times 256$ image into a compact $16 \times 16 \times 64$ binary tensor, providing a highly generalized discrete space for our Bernoulli Flow Models. The pre-trained checkpoints for this autoencoder (including both the encoder and decoder) are officially provided by the BLD authors and can be accessed publicly\footnote{\url{https://drive.google.com/drive/folders/1CFlixXLnEHZ0jaRLXLS4d_bJMZwG-Iih}}. To ensure a fair comparison and robust performance in modeling these binary latents, our core generative network directly adopts the Transformer-based backbone proposed in BLD. While this standard configuration is seamlessly applied to the latent image representations, we introduce minor structural adaptations, specifically to the sequence length and positional embedding layers, to properly accommodate the unique spatial formats of Binarized MNIST and the 2D Ising model. Unlike the high-dimensional image tasks, Binarized MNIST~\cite{deng2012mnist} and the Ising model~\cite{ising1925beitrag} do not rely on any encoder-decoder mappings; our models are trained directly on their native binary spaces. For the Ising model specifically, the spin configurations were synthetically simulated using the Wolff cluster~\cite{wolff1989collective} algorithm, yielding a custom dataset of 15000 training samples and 3000 testing samples. Across all datasets, we maintain a consistent optimization strategy using the Adam optimizer coupled with a linear learning rate warmup.

\begin{table*}[htbp]
    \centering
    \caption{Configuration and training details for BFM across different datasets.}
    \label{tab:hyperparameters}
    \begin{tabular}{lccc}
        \toprule
        \textbf{Component} & \textbf{LSUN \& FFHQ} & \textbf{Binarized MNIST} & \textbf{Ising Model} \\
        \midrule
        Transformer layers      & 24                 & 24                 & 24                 \\
        Attention heads         & 12                 & 12                 & 12                 \\
        Embedding dimension     & 768                & 768                & 768                \\
        Sequence length (block size) & 256           & 28                 & 20                 \\
        Training iterations     & 500K               & 500K               & 100K               \\
        Batch size              & 96                 & 512                & 512                \\
        Optimizer               & Adam               & Adam               & Adam               \\
        Learning rate           & $2 \times 10^{-4}$ & $2 \times 10^{-4}$ & $2 \times 10^{-4}$ \\
        Weight decay            & 0.0                & 0.0                & 0.0                \\
        Warmup iterations       & 10,000             & 10,000             & 10,000             \\
        Hardware                & 2$\times$ RTX 4090 & 2$\times$ RTX 4090 & 2$\times$ RTX 4090 \\
        \bottomrule
    \end{tabular}
\end{table*}

\subsection{Compute Resources and Inference Efficiency}
\label{appendix_compute_resources}

To improve reproducibility and address the computational resource requirements of our experiments, we further report the inference-time compute cost for each task. 
All measurements are conducted with batch size $1$ and $64$ sampling steps, corresponding to the single-sample generation latency rather than throughput under large-batch sampling. 
For each benchmark, we first perform $5$ warm-up runs to exclude CUDA initialization and caching overhead, and then average the wall-clock time over $20$ repeated runs. 
We enforce CUDA synchronization before and after each measured run. For LSUN and FFHQ, we report a shared Image-256 setting because both tasks use the same $256 \times 256$ image resolution and the same $16 \times 16 \times 64$ binary latent representation produced by the pre-trained Binary Autoencoder (BAE). 
This benchmark includes both binary latent sampling and decoding through the BAE decoder. 
In contrast, Binarized MNIST and the Ising model are sampled directly in their native binary spaces without any encoder-decoder module.

\begin{table}[h]
\centering
\caption{
\textbf{Single-sample inference resource usage.}
We report the average wall-clock latency and peak GPU memory for one generated sample using $64$ sampling steps. 
The Image-256 setting is shared by LSUN and FFHQ since they use the same image resolution and binary latent format.
}
\label{tab:single_sample_compute}
\resizebox{\linewidth}{!}{
\begin{tabular}{lccccc}
\toprule
Task & Representation & Batch Size & Sampling Steps & Avg. Time / Sample (s) & Peak GPU Memory (GB) \\
\midrule
LSUN / FFHQ Image-256 & Binary latent + BAE decoder & 1 & 64 & 0.3837 & 1.0743 \\
Binarized MNIST & Native binary space & 1 & 64 & 0.3507 & 0.6482 \\
Ising Model & Native binary space & 1 & 64 & 0.3531 & 0.6480 \\
\bottomrule
\end{tabular}
}
\end{table}

As shown in Table~\ref{tab:single_sample_compute}, all tasks can be sampled with less than $1.1$ GB of peak GPU memory under the single-sample setting. 
The Image-256 benchmark has slightly higher latency and memory consumption because it operates on a higher-dimensional binary latent space and additionally invokes the BAE decoder to reconstruct RGB images. 
Binarized MNIST and Ising require less memory since they are generated directly in low-dimensional binary spaces. 
These measurements indicate that our $64$-step sampler remains lightweight at inference time across both image-latent generation and native binary-domain generation.

\section{Bootstrap Uncertainty Analysis of FID}
\label{app:bootstrap_fid}

To assess the statistical uncertainty of the FID
comparisons, we perform bootstrap analysis on
LSUN Bedrooms, LSUN Churches, and FFHQ.
For each method, we generate 50,000 samples and
perform 50 bootstrap resamples of the generated
Inception features with replacement. Within each
dataset, BFM and BLD use identical real-data
statistics, feature extractors, sample counts,
and bootstrap protocols.

\begin{table}[htbp]
\centering
\small
\caption{Bootstrap uncertainty analysis of FID on
LSUN Bedrooms, LSUN Churches, and FFHQ.
We report the full-sample FID, bootstrap mean
$\pm$ standard deviation, and 95\% bootstrap
percentile interval from 50 resamples.}
\label{tab:bootstrap_fid}
\begin{tabular}{llccc}
\toprule
Dataset & Method &
Full FID $\downarrow$ &
Boot. Mean $\pm$ Std. $\downarrow$ &
95\% Interval \\
\midrule
\multirow{2}{*}{Bedrooms}
& BFM & \textbf{5.86} & $\mathbf{6.14 \pm 0.04}$ & [6.03, 6.22] \\
& BLD & 6.21 & $6.48 \pm 0.04$ & [6.39, 6.55] \\
\midrule
\multirow{2}{*}{Churches}
& BFM & \textbf{5.32} & $\mathbf{5.59 \pm 0.04}$ & [5.50, 5.67] \\
& BLD & 5.47 & $5.71 \pm 0.03$ & [5.66, 5.77] \\
\midrule
\multirow{2}{*}{FFHQ}
& BFM & \textbf{10.87} & $\mathbf{11.21 \pm 0.08}$ & [11.11, 11.36] \\
& BLD & 11.46 & $11.79 \pm 0.07$ & [11.71, 11.94] \\
\bottomrule
\end{tabular}
\end{table}

As shown in Table~\ref{tab:bootstrap_fid},
BFM consistently achieves lower bootstrap mean
FID than BLD across all three datasets.
All bootstrap standard deviations are below
0.08, indicating limited finite-sample variability.
The 95\% intervals are separated on Bedrooms
and FFHQ, while those on Churches overlap slightly.

The bootstrap means are higher than the
corresponding full-sample FIDs because resampling
with replacement reduces the number of unique
generated samples. This effect applies to both
methods under the same evaluation protocol.

Overall, the bootstrap results preserve the
relative performance ranking between BFM and BLD
under finite-sample resampling.

\end{document}